\renewcommand{\c}{\mathbf{c}}
\renewcommand{\d}{\mathbf{d}}
\renewcommand{\i}{\mathbf{i}}
\renewcommand{\j}{\mathbf{j}}
\renewcommand{\k}{\mathbf{k}}
\renewcommand{\t}{\mathbf{t}}
\renewcommand{\u}{\mathbf{u}}
\renewcommand{\v}{\mathbf{v}}
\newcommand{\x}{\mathbf{x}}
\newcommand{\y}{\mathbf{y}}
\newcommand{\z}{\mathbf{z}}
\newcommand{\F}{\mathbf{F}}
\newcommand{\G}{\mathbf{G}}
\newcommand{\vOmega}{{\boldsymbol \omega}}
\newcommand{\vTau}{{\boldsymbol \tau}}
\newcommand{\vNu}{{\boldsymbol \nu}}
\newcommand{\Zero}{\mathbf{0}}
\newcommand{\mI}{\mathtt{I}}
\newcommand{\mR}{\mathtt{R}}
\newcommand{\mS}{\mathtt{S}}
\newcommand{\mU}{\mathtt{U}}
\newcommand{\mGamma}{\mathtt{\Gamma}}
\newcommand{\Real}{\mathbb{R}}
\newcommand{\Zahlen}{\mathbb{Z}}
\newcommand{\tC}{\mathcal{C}}
\newcommand{\tG}{\mathcal{G}}
\newcommand{\tQ}{\mathcal{Q}}
\newcommand{\by}{{\times}} 
\newcommand{\sinc}{\mathrm{sinc}}
\newcommand{\norm}[1]{\left\lVert#1\right\rVert} 
\newcommand{\Mat}[2]{\left[\! \begin{array}{#1} #2 \end{array} \!\right]}
\newcommand{\eqdef}{\overset{\underset{\mathrm{def}}{}}{=}}
\newtheorem{theorem}{Theorem}
\newtheorem{proposition}[theorem]{Proposition}
\begin{document}

\title{Multiresolution Search of the Rigid Motion Space \\
  for Intensity Based Registration}

\author{Behrooz Nasihatkon, Fredrik Kahl
  \IEEEcompsocitemizethanks{The authors are  with the Department
of Signals and Systems, Chalmers University of Technology, Sweden, and MedTech West, Sweden. Email: \{behrooz.nasihatkon, fredrik.kahl@chalmers.se\}} }

\maketitle

\begin{abstract}
  We study the relation between the target functions of low-resolution and high-resolution intensity-based registration for the
  class of rigid transformations. Our results show that low resolution target values can tightly bound the high-resolution
  target function in natural images. This can help with analyzing and better understanding the process of multiresolution image
  registration. It also gives a guideline for designing multiresolution algorithms in which the search space in higher
  resolution registration is restricted given the fitness values for lower resolution image pairs. To demonstrate this, we
  incorporate our multiresolution technique into a Lipschitz global optimization framework. We
  show that using the multiresolution scheme can result in large gains in the efficiency of such algorithms.  The method is
  evaluated by applying to 2D and 3D registration problems as well as the detection of reflective symmetry in 2D and 3D images.
\end{abstract}

\section{Introduction}
This paper investigates the implications of low-resolution image registration on the registration in higher resolutions.  We
focus on rigid intensity-based registration with correlation as fitness measure. We show that the fitness computed for lower
resolution image pairs puts a bound on the fitness for higher resolution images. Our results can be exploited for the design and
analysis of multiresolution registration techniques, especially the global optimization approaches.


Most of the approaches to globally optimally solving the registration problem deal with the alignment of point sets
\cite{Huttenlocher1992_Hausdroff, Li2007_3D3Dreg, Yang2013_goicp,Parra2014_3D_reg, Pfeuffer2012_BB_medical} or other geometric
properties \cite{Olsson2009_bb_registration}. For example, Li and Hartley in \cite{Li2007_3D3Dreg} propose an algorithm to
register two sets of points in a globally optimal way based on Lipschitz optimization \cite{Hansen1995_lipschitz}. Yang et al.
\cite{Yang2013_goicp} obtain a globally optimal solution by combing a branch and bound scheme with the Iterative Closest Point algorithm.

Such approaches have not been applied, as much, to intensity-based registration, due to the high computational
cost. Nonetheless, in some applications it is worth to sacrifice efficiency to achieve more accurate results. A notable example
is the construction of Shape Models \cite{Cootes1995_ASM} where the models are trained once and for all. In such a context,
Cremers et al.  \cite{Cremers2008_shape_priors} use Lipschitz optimization to search for optimal rotation and translation
parameters for the alignment of 2D shape priors. After all, the computational burden is still the major limiting factor of such
algorithms. It is well known that the multiresolution techniques can help with speeding up the registration algorithms
\cite{Corvi1995, Thevenaz2000, Maes1999}. However, to use a multiresolution scheme in a global optimization framework we need a
theory which describes the relation among the target values of different resolutions.  We provide such a theory in this paper,
and based on that, propose an algorithm in which many candidate areas of the search space get rejected by examining the target
value in lower resolutions.


This idea of \emph{early elimination} has been well explored in the area of template matching, where a small template patch is
searched in a target image. The successive elimination approach \cite{Li1995_successive} rules out many candidate patch
locations by examining a low-cost lower-bound on the cost function. The approach is extended in \cite{Lee1997_pyramid} to a
multilevel algorithm, each level providing a tighter bound with higher computational cost. A multitude of approaches have been proposed based on the same idea of a
multilevel succession of bounds \cite{Gao2000_multilevel,Chen2001_winner_takes_all, Di2003_bounded_partial_correlation,
  Hel2005_proj_kernels,Mattoccia2008_bounded_corr,Tombari2009_inc_dissimilarity,Ouyang2009_Walsh_Hadamard}.  In
\cite{Ouyang2012_performance_eval} some of the most successful algorithms of this class have been compared.  Among them the work
of Gharavi-Alkhansari \cite{Gharavi2001_low_res_prune} is very related to ours. In this approach each level corresponds to
a different image resolution. By moving from low to high resolution we get tighter bounds requiring more computations. For most
images many of the candidate solutions are eliminated by computing the match measure in lower resolutions. However, in this
method, like all the approaches above,  the search is only performed on a grid of translation vectors with 1 pixel intervals. Most of the
work to extend this to more general transformations do not fully search the parameter space.  We should mention, however, the
work of Korman et al. \cite{Korman2013_affine_template_matching} for affine pattern matching. It considers a grid of affine
transformations and makes use of \emph{sublinear approximations} to reduce the cost of examining each transformation.  It runs a
branch and bound algorithm in which a finer grid is used at each stage. Obtaining a solution close enough to the optimal target
is guaranteed with \emph{high probability}. The bounds, however, are in the form of asymptotic growth rates, and are
\emph{learned} empirically from a large data set. The algorithm, therefore, is not 
provably globally optimal in the exact mathematical sense. 



Our work mainly deals with the alignment of two images rather than matching a small patch. Similar to
\cite{Gharavi2001_low_res_prune} we take a multiresolution approach by providing bounds between different resolutions. However,
since rotation is involved and also sub-pixel accuracy is required, the registration problem is analyzed in the continuous
domain, by interpolating the discrete images. We incorporate the multiresolution
paradigm into a Lipschitz optimization framework which searches the entire rigid motion space for a globally optimal solution.
We show that the multiresolution approach can lead to huge efficiency gains.  Another closely related problem
is estimating reflective symmetry in 2D and 3D images \cite{Kazhdan2004_reflective_symmetry}. We deomostrate  that our algorithm can
readily be applied to this problem as well. In short, the major contributions of this paper are
\begin{itemize}

\item 
  Providing a framework to analyze  the relation among registration targets at different resolutions,

\item presenting inter-resolution bounds for several scenarios such as the use of different interpolation techniques or when only one  image is decimated, and
  giving insights about how tighter bounds could be found,

\item A novel algorithm to integrate the multiresolution paradigm into a Lipschitz global optimization framework resulting in 
  considerable savings in computations.
 
\end{itemize}
We also introduce new effective bounds for the Lipschitz constants which are much smaller than what proposed in
the literature. We should mention, however, that our main concern here is to demonstrate the efficiency gains
of a multiresolution approach, and not optimizing the single-resolution algorithm itself. After providing a background in
Sect. \ref{sec:background}, we introduce the inter-resolution bounds in Sect. \ref{sec:low_res_reg}. A basic grid search
algorithm is presented in Sect. \ref{sec:basic_algorithm}, for better understanding the concept of the multiresolution search,
before presenting the multiresolution Lipschitz algorithm in Sect. \ref{sec:lipschitz_optim}. We conduct experiments in
Sect. \ref{sec:experiments} to evaluate our algorithm on 2D and 3D registration examples, as well as, finding the axis or plane
of symmetry in 2D and 3D images.


\section{Background}
\label{sec:background}
Consider two images $f$ and $g$ represented as continuous functions $\Real^d \rightarrow \Real$, where $d=2$ or $d=3$. For
intensity-based rigid registration we seek to maximize the correlation function
\begin{align}
\label{eq:target}
Q(\mR,\t) = \int f(\x) \, g(\mR (\x + \t))\, d\x,
\end{align}
over $\mR$ and $\t$, the rotation matrix and the translation vector\footnote{Here, all integrals are over $\Real^d$ unless
  otherwise specified.}. This is equivalent to minimizing the integrated squared error
$\int \left(f(\x) - g(\mR (\x {+} \t))\right)^2\, d\x$, as the transformation is rigid.

The fitness function \eqref{eq:target} can be reformulated in the frequency domain. Let $F(\z)$ and $G(\z)$ respectively
represent the Fourier transforms of $f(\x)$ and $g(\x)$, and notice that the Fourier transform of $g(\mR (\x + \t))$ is equal to $G(\mR
\z) \, e^{2\pi i \t^T \z}$. Applying the Parseval's Theorem to \eqref{eq:target} gives
\begin{align}
\label{eq:target_freq}
Q(\mR,\t) = \int \overline{F(\z)} \, G(\mR \z) \, e^{2\pi i \t^T \z}\, d\z.
\end{align}
where $\overline{F(\z)}$ represents the complex conjugate of $F(\z)$.  

\subsection{Discrete images}
For discrete images we only have samples $\{f_{\i}\}$ at discrete locations $\i \in \Zahlen^d$ ($d=2$ or $3$). One could think
of $\{f_{\i}\}$ as samples taken from a continuous image $f$ according to $f_{\i} = f(T\, \i)$, where the scalar $T$ is the
sampling period\footnote{We have assumed that the discrete and continuous coordinates share the same origin.  More
  generally, we can write $f_{\i} = f(T\, \i - \x_0)$, where $\x_0 \in \Real^d$ is the displacement of the origin.}. To get the
continuous image from the discrete image we suppose that the continuous image is bandlimited to the corresponding Nyquist
frequency $\frac{1}{2T}$. This means that the continuous image can be obtained by \emph{sinc} interpolation:
\begin{align}
  \label{eq:interp_sinc}
  f(\x) = \sum_{\i \in \Zahlen^d} f_{\i} ~ \sinc(\frac{\x}{T} - \i),
\end{align}
where $\sinc(x) = \frac{\sin(\pi x)}{\pi x}$ for scalars, and 
\begin{align}
  \label{eq:sinc}
  \sinc(\x) = \prod_{i=1}^d \sinc(x_i),
\end{align}
for $\x \in \Real^d$.  Digital images  are usually defined at a rectangular grid of pixels
$P \subseteq \Zahlen^d$. For consistency with \eqref{eq:interp_sinc}, we extend them by setting $f_{\i} {=} 0$ outside the image
boundaries where $\i {\notin} P$. Thus, the summation in \eqref{eq:interp_sinc} can be over $P$ rather than
$\Zahlen^d$. Notice that the continuous image created by \eqref{eq:interp_sinc} can be nonzero outside the image boundaries at
non-integer coordinates.

The following proposition is useful for numerical integration in the frequency domain, and can be directly verified from the
definition of the Discrete Fourier Transform (DFT).
\begin{proposition}
  \label{thm:DFT_samples}
  Consider a discrete image $\{f_{\i}\}_{\i \in P}$ defined over a \emph{square} grid of pixels $P$ with $n$  pixels along each
  dimension. If the continuous image $f$ is obtained using \eqref{eq:interp_sinc}, the corresponding DFT values are equal to the
  samples of $F(\z)$, the Fourier transform of $f(\x)$, taken at $\frac{1}{nT}$ intervals\footnote{The equality might be
    up to a known global scaling factor depending on the convention used for defining DFT.}.
\end{proposition}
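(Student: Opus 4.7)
The plan is to take the Fourier transform of the sinc interpolation formula \eqref{eq:interp_sinc} and then show that its samples at spacing $\frac{1}{nT}$ coincide, up to a scaling factor, with the DFT of $\{f_{\i}\}_{\i\in P}$. This reduces to combining three standard facts: linearity of the Fourier transform, the shift theorem, and the fact that the Fourier transform of $\sinc(\x/T)$ is a rectangular (box) function of width $1/T$ per dimension scaled by $T^d$.

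First I would apply the Fourier transform to both sides of \eqref{eq:interp_sinc}. Using the shift theorem, each term $\sinc(\x/T - \i)$ transforms to $T^d\, \Pi(T\z)\, e^{-2\pi i T\,\i^T \z}$, where $\Pi(\cdot)$ denotes the $d$-dimensional unit box supported on $[-\tfrac{1}{2},\tfrac{1}{2}]^d$. Summing over $\i\in P$ gives
\begin{equation*}
F(\z) \;=\; T^d\, \Pi(T\z)\, \sum_{\i\in P} f_{\i}\, e^{-2\pi i T\,\i^T \z}.
\end{equation*}
Next I would sample $F$ at the lattice points $\z_{\k}=\k/(nT)$ for $\k\in\Zahlen^d$. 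On the Nyquist band, i.e.\ for indices $\k$ with $|k_\ell|<n/2$ in every coordinate, the box evaluates to $\Pi(T\z_{\k})=1$, so
\begin{equation*}
F\!\left(\tfrac{\k}{nT}\right) \;=\; T^d \sum_{\i\in P} f_{\i}\, e^{-2\pi i\,\k^T\i / n},
\end{equation*}
which is exactly the $d$-dimensional DFT of $\{f_{\i}\}_{\i\in P}$ up to the global factor $T^d$ absorbed into the ``known scaling'' footnote.

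The one subtlety I would need to spell out is the behavior at the boundary of the Nyquist band (indices with $|k_\ell|=n/2$, if $n$ is even), where $\Pi$ is discontinuous; this is handled by adopting the standard convention that the indicator takes the value $\tfrac{1}{2}$ at the endpoints and folding its two contributions together, which is consistent with how the DFT aliases such frequencies. A second minor point is being careful about the one-dimensional convention $\sinc(0)=1$ and the product form \eqref{eq:sinc}, so that the $d$-dimensional Fourier pair genuinely factorizes as $T^d\,\Pi(T\z)$. I expect the boundary-of-band case to be the only real subtlety; everything else is a one-line application of linearity and the shift theorem.
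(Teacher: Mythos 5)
Your proposal is correct and is exactly the direct verification the paper has in mind: the paper offers no written proof of Proposition~\ref{thm:DFT_samples}, stating only that it ``can be directly verified from the definition of the DFT,'' and your computation (Fourier-transform \eqref{eq:interp_sinc} via the shift theorem, observe $F(\z)=T^d\,\Pi(T\z)\sum_{\i\in P}f_{\i}e^{-2\pi i T\,\i^T\z}$, and sample at $\k/(nT)$) is that verification, with the factor $T^d$ absorbed into the footnote's scaling convention. Your remark about the band-edge indices when $n$ is even is a genuine subtlety the paper silently ignores, and your resolution of it is sensible.
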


\subsubsection*{Other interpolation techniques}
To accurately perform the sinc interpolation one  needs to consider a large neighbourhood of pixels at any certain
point. Therefore, in practice, kernels with a bounded support are used instead of the sinc function. A large class of
interpolation techniques can be formulated as:
\begin{align}
  \label{eq:interp}
  f'(\x) = \sum_{\i \in \Zahlen^d} f_{\i} ~ s(\frac{\x}{T} - \i),
\end{align}
where $s\colon \Real^d \rightarrow \Real$ is the interpolation kernel, which is typically a low-pass filter. To have a
consistent interpolation, $s(\x)$ needs to be equal to 1 at the origin, and equal to zero at all other discrete sites with
integer coordinates. Two examples of a bounded support kernel are:

\paragraph*{Nearest neighbor interpolation}
In this technique each point in the continuous domain gets the value of the closest discrete pixel. 
This can be obtained by using the box kernel
\begin{align}
  \label{eq:nn_kernel}
  s_{\mathrm{n}}(\x) = 
  \begin{dcases*}
    1 & $\norm{\x}_\infty \leq \frac{1}{2}$, \\
    0 & elsewhere.
  \end{dcases*}
\end{align}

\paragraph*{Bilinear interpolation} In bilinear interpolation (trilinear for 3D) $s$ is the product of triangular kernels in
each dimension. The kernel has a bounded support, and can also be represented as the convolution of a box kernel with itself:
\begin{align}
  s_{\mathrm{t}}(\x) = s_{\mathrm{n}}(\x) * s_{\mathrm{n}}(\x). \label{eq:bilinear_kernel}
\end{align}


\subsection{Computation of the target function}
Assume that $\{f_{\i}\}$ and $\{g_{\i}\}$ are defined on a grid of pixels $P$. By substituting $f'$ (and $g'$) in \eqref{eq:interp} for $f$ and $g$ in
\eqref{eq:target}, we can write the target function corresponding to the discrete image pair $\{f_{\i}\}$ and $\{g_{\i}\}$ as
follows
\begin{align}
  Q(\mR,\t)  &=  \sum_{\i \in P} \sum_{\j \in P} f_{\i} \, g_{\j} \int  s(\frac{\x}{T} - \i)  \,  s(\frac{\mR (\x + \t)}{T} - \j) \, d\x,
  \label{eq:target_discrete0}
  \\
  &=  T^d \sum_{\i \in P} \sum_{\j \in P} f_{\i} \, g_{\j} ~ W_\mR(\mR (\i + \frac{\t}{T}) - \j)
  \label{eq:target_discrete}
\end{align}
where 
\begin{align}
\label{eq:Wd}
W_\mR(\d) =  \int  s(\mR \x + \d)  \,  s(\x) \, d\x.
\end{align}
Notice that $\frac{\t}{T}$ in \eqref{eq:target_discrete} is the translation in the pixel coordinates. Now look at $\mR (\i {+}
\frac{\t}{T}) - \j$.  This is equivalent to rigidly transforming $\i$ according to $\mR$ and $\frac{\t}{T}$, and then taking the
difference with the pixel position $\j$. The weight given to each pair of pixel values $f_\i$ and $g_\j$ is equal to $W_\mR(\mR
(\i {+} \frac{\t}{T}) - \j)$.  The value of $W_\mR(\d)$ is expected to decay as the vector $\d$ grows in size.  If $s$ has a
bounded support like \eqref{eq:nn_kernel} or \eqref{eq:bilinear_kernel}, then $W_\mR(\d)$ will have a bounded support too.
In other cases $W_\mR(\d)$ is negligible for large enough vectors $\d$.  Therefore, for each $\i$, we can sum over pixels $\j$
within a certain neighbourhood of $\i$. This means that computing the target \eqref{eq:target_discrete} needs $O(|P|)$
rather than $O(|P|^2)$ computation, where $|P|$ is the number of pixels.

An essential part in computing \eqref{eq:target_discrete} is to find the weights $W_\mR(\mR (\i {+} \frac{\t}{T}) {-} \j)$.  For
the nearest neighbour kernel \eqref{eq:nn_kernel}, the integral \eqref{eq:Wd} is simply the intersection area of two squares. As
for the sinc kernel, a formula for \eqref{eq:Wd} can be calculated in the frequency domain using the Parseval's
theorem. Nevertheless, even if large neighbourhoods are avoided by using bounded support kernels, the computation of $W_\mR(\d)$
itself is still costly. One solution is to precompute $W_\mR(\d)$ on a grid of $\mR$ and $\d$ values to look up when necessary.
Most registration algorithms, however, consider a simple form for $W_\mR(\d)$ which does not depend on $\mR$ once $\d$ is
known. Basically, what they do is  discretizing the correlation integral. Let us rewrite
\eqref{eq:target_discrete0} as
\begin{align}
\label{eq:target_discrete2}
\resizebox{.88\columnwidth}{!}{$Q(\mR,\t)  =  \int  \sum_{\k \in P} f_{\k} s(\frac{\x}{T} {-} \k)  \sum_{\j \in P}  \, g_{\j}    \,  s(\frac{\mR (\x {+} \t)}{T} {-} \j) \, d\x.$}
\end{align}
Now, we discretize the above integral at $\x = T \k$ for all $\k \in P$: 
\begin{align}
Q(\mR,\t)  &\approx T^d \sum_{\i \in P}  \sum_{\k \in P} f_{\k} s(\i - \k)  \sum_{\j \in P}  \, g_{\j}    \,  s(\mR (\k + \frac{\t}{T}) - \j) \, d\x,
\nonumber\\
&= T^d \sum_{\i \in P} f_{\i} \sum_{\j \in P}    \, g_{\j}  \, s(\mR (\i + \frac{\t}{T}) - \j) \, d\x. \label{eq:target_discretized}
\end{align} 
Here, the weights are simply the kernel values $s(\mR (\i {+} \frac{\t}{T}) {-} \j)$ which do not depend on $\mR$ if $\mR (\i
{+} \frac{\t}{T}) {-} \j$ is known. Another observation is that \eqref{eq:target_discretized} is independent of what kernel is
used for interpolating $\{f_\i\}$ in the case where $\{f_\i\}$ and $\{g_\i\}$ are interpolated using two different kernels.  For
the nearest neighbour interpolation, $\sum_{\j \in P} \, g_{\j} \, s(\mR (\i + \frac{\t}{T}) - \j)$ is simply the intensity
value of the closest pixel $\j$ to $\mR (\i {+} \frac{\t}{T})$. For the sinc interpolation, this value may be approximated
efficiently by looking up in an upsampled version of $\{g_{\j}\}$. In our experiments we observed that discretized computation
of the correlation integral does not cause major problems, unless for extremely decimated images or when a very high accuracy is
expected (e.g. less than .2 pixels for translation). However, one ideally needs to further consider the discretization
error.


\section{Implications of low resolution registration}
\label{sec:low_res_reg}

Assume that the discrete images $\{f_{\i}\}$ and $\{g_{\i}\}$ are decimated to low-resolution images $\{f^l_{\i}\}$ and
$\{g^l_{\i}\}$. One may ask the question ``What can the fitness \eqref{eq:target_discrete} computed for lower resolution images
say about the fitness for higher resolution images?''. This is an important question since the target function can be computed
much faster in low resolution. If the original images are decimated by a factor of $m$, then computing the target function takes
$1/{m^d}$ less computations, where $d=2,3$ for 2D and 3D images respectively. This fact is clear from
\eqref{eq:target_discrete}, where it is shown that the amount of computations is proportional to the number of
pixels\footnote{In fact, the double summation in \eqref{eq:target_discrete} implies that the amount of computations is
  proportional to the square of the number of pixels. But, in practice, we only consider $\j$-s within a fixed neighbourhood of
  each $\i$.}.

Decimation of a discrete image $\{f_{\i}\}$ may be carried out by
\begin{enumerate}

\item low-pass filtering the corresponding continuous image $f$  obtained from \eqref{eq:interp_sinc}, and

\item sampling the filtered images at a period of $m T$, where $T$ is the sampling period of $\{f_{\i}\}$. 
\end{enumerate}
The low-pass filter handles the aliasing distortion caused by downsampling, and thus, must have a cutoff
frequency of $\frac{1}{2mT}$ or less along every direction. To simplify the derivations, we consider a  radial filter 
with the ideal frequency response
\begin{align}
  \label{eq:low_pass}
  L(\z) = 
   \begin{dcases*}
     1 & $\norm{\z} \leq \frac{1}{2mT}$, \\
     0 & otherwise,
   \end{dcases*}
\end{align}
where $\norm{\z}$ is the $l^2$-norm (length) of $\z$. The filtered image $f^l$ can be obtained as $f^l=l*f$ where $l$ is the
impulse response of $L$ and ``*'' is the convolution operator.  This filter eliminates all the frequency
components outside a ball of radius $\frac{1}{2mT}$ around the origin. Therefore, nothing is lost by sampling the filtered image $f^l$ at intervals
$mT$ to obtain $\{f^l_{\i}\}$, the low resolution image. We can also define a complementary high-pass filter
\begin{align}
  \label{eq:high_pass}
  H(\z) = 
  \begin{dcases*}
      0 & $\norm{\z} \leq \frac{1}{2mT},$ \\
      1 & $ \norm{\z} \geq \frac{1}{2mT},~\norm{\z}_\infty \leq \frac{1}{2T}.$
   \end{dcases*}
\end{align}
The values of $H(\z)$ for $\norm{\z}_\infty {>} \frac{1}{2T}$ do not matter, as $H$ is applied to $f,g$ which are bandlimited to
$\frac{1}{2T}$ due to \eqref{eq:interp_sinc}.  One could filter  $f$ with $H$ to obtain $f^h$. Notice that $f(\x) =
f^l(\x) + f^h(\x)$.  To make a discrete image $\{f^h_{\i}\}$ out of $f^h$ we should sample at intervals $T$. Therefore, the
discrete image $\{f^h_{\i}\}$ has the same size as $\{f_{\i}\}$ while $\{f^l_{\i}\}$ has roughly $1/m$ less samples in every
direction\footnote{We do not sample outside the boundaries of $f_{\i}$ for creating $f^l_{\i}$ and $f^h_{\i}$, even though the
  samples  might not be exactly zero after filtering. The loss is supposedly negligible  given that the images
  have a dark (near zero) margin.}. The corresponding continuous images of $\{f^l_{\i}\}$ and $\{f^h_{\i}\}$ computed from
\eqref{eq:interp_sinc} are exactly equal to $f^l$ and $f^h$ respectively. Note that to use \eqref{eq:interp_sinc} on
$\{f^l_{\i}\}$ one must use $mT$ instead of $T$.

Now, let us have a closer look at the continuous images $f^l$, $f^h$, $g^l$ and $g^h$. An important observation is that
$f^l(\x)$ and $g^h(\mR (\x + \t))$ are orthogonal for any choice of $\mR$ and $\t$. This is because $f^l(\x)$ has no frequency
component outside the ball of radius $\frac{1}{2mT}$, while $g^h(\mR (\x + \t))$ has no frequency components inside this ball. 
One way to see this is to write the inner product (correlation) between these two functions in the frequency domain using the 
Parseval's theorem:
\begin{align}
\label{eq:asdfsafdsa}
Q(\mR,\t) &= \int f^l(\x) \, g^h(\mR (\x {+} \t))\, d\x
\\
&= \int \overline{F^l(\z)} \, G^h(\mR \z) \, e^{2\pi i \t^T \z}\, d\z = 0.
\end{align}
Similarly, $f^h(\x)$ and $g^l(\mR (\x {+} \t))$ are orthogonal. This implies 
\begin{align}
\label{eq:asdfdsagsg}
Q(\mR,\t) &= \int f(\x) \, g(\mR (\x {+} \t))\, d\x 
\nonumber\\
&= \int f^l(\x) \, g^l(\mR (\x {+} \t))\, d\x + \int f^h(\x) \, g^h(\mR (\x {+} \t))\, d\x
\nonumber\\
&= Q^l(\mR,\t) + \int f^h(\x) \, g^h(\mR (\x {+} \t))\, d\x,
\end{align}
where $Q^l(\mR,\t)$ is the target function computed for $f^l$ and $g^l$. Therefore, we have
\begin{align}
| Q(\mR,\t) - Q^l(\mR,\t)| &= \left|\int f^h(\x) \, g^h(\mR (\x {+} \t))\, d\x \right| \label{eq:LH_error_l1}
\\
& \hspace{-80pt} \leq  \sqrt{\int \left(f^h(\x)\right)^2 d\x} ~ \sqrt{\int \left(g^h(\x)\right)^2 d\x} = \norm{f^h}\norm{g^h}, \label{eq:LH_error_l2}
\end{align}
where $\norm{f}$ represents the $L^2$-norm of $f$. We obtain \eqref{eq:LH_error_l2} using the Cauchy-Schwarz inequality followed
by a change of variables $\x \leftarrow \mR (\x {+} \t)$ for the right square root. It follows
\begin{align}
  \label{eq:LH_error}
   Q^l(\mR,\t) - E_{fg}^h \leq Q(\mR,\t)   \leq Q^l(\mR,\t) +  E_{fg}^h
\end{align}
where $E_{fg}^h = \norm{f^h}\norm{g^h}$.  The idea here is that $E_{fg}^h$ tends
to be small as for natural images the energy is mostly concentrated in the lower frequency bands.

Now, suppose that we want to find the maximum of the target function $Q(\mR,\t)$ over a
grid $\{\mR_k,\t_k\}$ of registration parameters. Assume that we have computed the target function $Q^l(\mR,\t)$ for the lower
resolution images for all the grid values $\mR_k,\t_k$. Represent respectively by $\mR^{l*},\t^{l*}$ and $\mR^*,\t^*$ the maximizers of
$Q^l(\mR,\t)$ and $Q(\mR,\t)$ over the grid, where $\mR^*,\t^*$ are yet unknown. Using \eqref{eq:LH_error} we get
\begin{align}
  Q(\mR^*,\t^*) \geq Q(\mR^{l*},\t^{l*}) \geq Q^l(\mR^{l*},\t^{l*}) - E_{fg}^h.
\end{align}
It means that we could rule out any $\mR,\t$ for which $Q^l(\mR,\t) 
< Q^l(\mR^{l*},\t^{l*}) - 2 E_{fg}^h$, since in that case 
\begin{align}
  Q(\mR,\t) &\leq Q^l(\mR,\t) + E_{fg}^h 
  \nonumber\\
  & < Q^l(\mR^{l*},\t^{l*}) - E_{fg}^h \leq Q(\mR^*,\t^*)
\end{align}
Fig. \ref{fig:lh_res_sinc}(c) illustrates this idea.  To further limit the search space one can compute the high resolution
target function $Q$ at $\mR^{l*},\t^{l*}$ and discard all $\mR,\t$ with $Q^l(\mR,\t) < Q(\mR^{l*},\t^{l*}) - E_{fg}^h$, as shown
in Fig.~\ref{fig:lh_res_sinc}(d). The example provided in Fig.~\ref{fig:lh_res_sinc}  shows the effectiveness of the proposed bound
even when the images are decimated by a factor of 16 in each dimension.

\begin{figure*}[float]
  \centering
  \begin{tabular}[c]{cc}
    \includegraphics[width=0.28\textwidth]{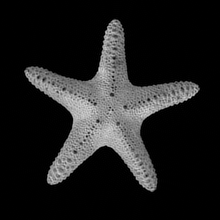}
    \includegraphics[width=0.0175\textwidth]{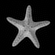}
    &
    \includegraphics[width=.28\textwidth]{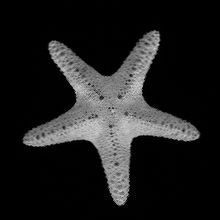}
    \includegraphics[width=0.0175\textwidth]{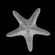}
    \\
    (a)
    &
    (b)
    \\
    \includegraphics[width=.28\textwidth]{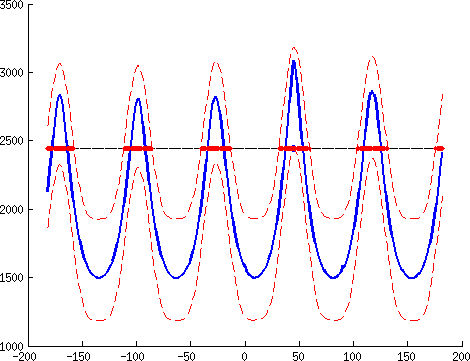}
    &
    \includegraphics[width=.28\textwidth]{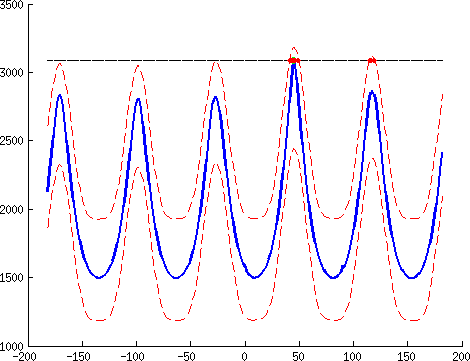}
    \\
    (c)
    &
    (d)
  \end{tabular}
  \caption{Bounds on high resolution registration fitness based on low resolution registration.  (a) A starfish image (obtained
    from http://www.ck12.org) is chosen as the reference image $\{f_\i\}$, as it exhibits multiple local maxima, (b) $\{f_\i\}$
    is rotated by 45 degrees around its centre and then undergoes a slight affine warp to produce $\{g_\i\}$. The only
    registration parameter is the rotation angle $\theta$. Both images are decimated by a factor of $m=16$ to create
    $\{f^l_\i\}$ and $\{g^l_\i\}$ (tiny images on the bottom-right of each image). (c) the lower and upper bounds
    $Q^l(\theta) \pm E_{fg}^h$ (red dashed graphs) envelop the high resolution fitness $Q(\theta) = Q(\mR,\t)$ (blue solid
    graph).  The horizontal line shows the value $Q^l(\mR^{l*},\t^{l*}) - E_{fg}^h$. The red thick parts of this line show the
    reduced search space, where we have $Q^l(\mR,\t) + E_{fg}^h \geq Q^l(\mR^{l*},\t^{l*}) - E_{fg}^h$ or
    $Q^l(\mR,\t) \geq Q^l(\mR^{l*},\t^{l*}) - 2 E_{fg}^h$. (d) to further restrict the search space the high resolution fitness
    $Q$ is computed at $(\mR^{l*},\t^{l*}) \equiv \theta^{l*}$. The horizontal line shows the value $Q(\mR^{l*},\t^{l*})$. The
    red thick parts of this line shows the reduced search space, namely where $Q^l(\mR,\t) + E_{fg}^h \geq
    Q(\mR^{l*},\t^{l*})$.
    Notice that decimating by a factor of 16 reduces the computations by a factor of 256 for 2D images. The amount of reduction
    in the search space provided by such a low resolution registration is notable.}
  \label{fig:lh_res_sinc}
\end{figure*}

\subsection{Bounded support interpolation}
\label{sec:bounded_support}
The results in the previous section hold when the sinc kernel \eqref{eq:sinc} is used to compute the target functions
\eqref{eq:target_discrete} or \eqref{eq:target_discretized}.  In this case, $s(\d)$ and $W_\mR(\d)$ do not have a compact
support, and for \eqref{eq:LH_error} to hold we need a large neighbourhood of pixels $\j$ around the location of each
transformed pixel $\mR (\i {+} \frac{\t}{T})$, which requires a lot of computations. We now consider computing the target
function \eqref{eq:target_discrete} with two kernels of bounded support, namely  the box
kernel \eqref{eq:nn_kernel} and the triangular kernel \eqref{eq:bilinear_kernel} which respectively correspond to nearest neighbour
and bilinear (or trilinear) interpolation methods.  Many other interpolation techniques can be treated  in a similar
manner.

Here, we assume that the low resolution images are obtained, as before, by performing the ideal filter \eqref{eq:low_pass} on
the continuous images $f$ and $g$ obtained by the \emph{sinc} kernel using \eqref{eq:interp_sinc}, and then resampling. However,
for computing the target function using \eqref{eq:target_discrete}, we use a box or triangular kernel for both low-resolution
and high-resolution images. Now, let us see what happens in the frequency domain.  The interpolation formula \eqref{eq:interp}
can be written as the following convolution
\begin{align}
  \label{eq:interp_conv}
  f'(\x) =  s(\frac{\x}{T}) * \scalebox{1.4}{$($} f(\x) \cdot \sum_{\i \in \Zahlen^d} \delta(\x - T \i ) \scalebox{1.4}{$)$},
\end{align}
where $\delta$ is the Dirac delta function. The Fourier transform of the kernel $s(\x)$ is $\sinc^\alpha(\z)$ with
$\alpha=1$ for the box kernel \eqref{eq:nn_kernel} and $\alpha=2$ for the triangular kernel \eqref{eq:bilinear_kernel}.  The
Fourier transform of \eqref{eq:interp_conv} is then
\begin{align}
  \label{eq:fourier_interp_freq}
  F'(\z) =  \sinc^\alpha(T \z) \cdot \sum_{\i \in \Zahlen^d} F(\z - \frac{\i}{T}).
\end{align}
Notice that $F(\z)$ is bandlimited to $\frac{1}{2T}$ in every dimension. Therefore, the term
$\sum_{\i \in \Zahlen^d} F(\z - \frac{\i}{T})$ is just the periodic repetition of $F(\z)$ with a period of $\frac{1}{T}$
along every dimension.  In the same way, for the low-resolution image $\{f^l_\i \}$ we have
\begin{align}
  \label{eq:fourier_interp_freq_low_res}
  F'^l(\z) =  \sinc^\alpha(mT \z) \cdot \sum_{\i \in \Zahlen^d} F^l(\z - \frac{\i}{mT}).
\end{align}
Notice that  $F^l(\z) = F(\z) \cdot L(\z)$ is bandlimited to $\frac{1}{2mT}$ along every
dimension. Similarly, $G'(\z)$ and $G'^l(\z)$ can be obtained.

Now, we like to find a bound on $\left|Q'(\mR,\t) - Q'^l(\mR,\t) \right|$, where $Q'(\mR,\t)$ is the target function
\eqref{eq:target} calculated for $f'$ and $g'$, and $Q'^l(\mR,\t)$ is the target for $f'^l$ and $g'^l$. Here, we present a simple
way of doing this, and leave more elaborate bounds for future research. Define the energy of a function $F$ within the
frequency region $S$ by
\begin{align}
  E_S(F) = \int_S |F(\z)|^2 d\z,
\end{align}
denote by $\Omega$ the ball of radius $\frac{1}{2mT}$ around the origin in the frequency domain, and let $\bar{\Omega}$ be
its complement.  Then we have the following proposition:
\begin{proposition}
  \label{thm:bound_supp_interp}
  The absolute difference on the target functions $\left|Q'(\mR,\t) - Q'^l(\mR,\t) \right|$ is
  bounded from above by
  \begin{align}
    &\sqrt{ E_{\Omega}(F') \, E_{\Omega}(G' {-} G'^l)}
    + \sqrt{ E_{\Omega}(G') \, E_{\Omega}(F' {-} F'^l)} \label{eq:bound_supp_interp_in1}
    \\
     +  &\sqrt{ E_{\Omega}(F' {-} F'^l) \, E_{\Omega}(G' {-} G'^l)} \label{eq:bound_supp_interp_in2}
    \\
     + &\sqrt{E_{\bar{\Omega}}^{\phantom{l}}(F') \, E_{\bar{\Omega}}(G')} + \sqrt{E_{\bar{\Omega}}(F'^l) \, E_{\bar{\Omega}}(G'^l)} \label{eq:bound_supp_interp_out}
   \end{align}
\end{proposition}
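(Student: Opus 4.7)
The plan is to mimic the frequency-domain derivation of \eqref{eq:asdfsafdsa}--\eqref{eq:asdfdsagsg}, but now applied to $f',g',f'^l,g'^l$ rather than $f,f^l,f^h$. Since orthogonality between ``low'' and ``high'' parts no longer holds (the bounded-support kernels smear energy across the cutoff $\tfrac{1}{2mT}$), we cannot get a clean subtraction and must instead control everything by Cauchy--Schwarz, handling the ball $\Omega$ and its complement $\bar\Omega$ separately.

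First I would use Parseval to write
\begin{align*}
Q'(\mR,\t) &= \int \overline{F'(\z)}\, G'(\mR\z)\, e^{2\pi i \t^T\z}\, d\z,\\
Q'^l(\mR,\t) &= \int \overline{F'^l(\z)}\, G'^l(\mR\z)\, e^{2\pi i \t^T\z}\, d\z,
\end{align*}
and split each integral as $\int = \int_\Omega + \int_{\bar\Omega}$. On $\bar\Omega$, I would bound the two pieces independently by Cauchy--Schwarz, using $|e^{2\pi i \t^T\z}|=1$ and the fact that the rotation $\mR$ maps $\bar\Omega$ to itself (so a change of variables gives $\int_{\bar\Omega}|G'(\mR\z)|^2\,d\z = E_{\bar\Omega}(G')$, and likewise for $G'^l$). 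This immediately yields the two terms in line \eqref{eq:bound_supp_interp_out}, after applying the triangle inequality to split $|Q' - Q'^l|$ into its $\Omega$ and $\bar\Omega$ contributions.

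The key step is the $\Omega$ part, where a naive one-step add-and-subtract such as $\overline{F'}G' - \overline{F'^l}G'^l = \overline{F'}(G'-G'^l) + \overline{(F'-F'^l)}\,G'^l$ would produce a factor $\sqrt{E_\Omega(G'^l)}$ instead of the desired $\sqrt{E_\Omega(G')}$. To obtain a symmetric bound I would use the three-term identity
\begin{align*}
\overline{F'}G' - \overline{F'^l}G'^l
= \overline{F'}(G'{-}G'^l) + \overline{(F'{-}F'^l)}\,G' - \overline{(F'{-}F'^l)}(G'{-}G'^l),
\end{align*}
which one checks by direct expansion. Integrating over $\Omega$, applying Cauchy--Schwarz to each of the three summands, and again invoking the rotation-invariance of $\Omega$ to replace $G'(\mR\z)$ and $(G'{-}G'^l)(\mR\z)$ with $G'(\z)$ and $(G'{-}G'^l)(\z)$ in the energy integrals, gives precisely the three terms in \eqref{eq:bound_supp_interp_in1}--\eqref{eq:bound_supp_interp_in2}.

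The main obstacle is not any single inequality but choosing the three-term decomposition above: it is the unique combination that produces an $E_\Omega(F')\cdot E_\Omega(G'{-}G'^l)$ factor and a symmetric $E_\Omega(G')\cdot E_\Omega(F'{-}F'^l)$ factor while absorbing the remaining cross term into the pure ``error $\times$ error'' contribution \eqref{eq:bound_supp_interp_in2}. Summing the $\Omega$ and $\bar\Omega$ estimates then yields the claimed bound.
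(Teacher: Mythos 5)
Your proposal is correct and follows essentially the same route as the paper's own proof: Parseval, a triangle-inequality split into $\Omega$ and $\bar{\Omega}$, the three-term identity $ab-a^lb^l = a(b{-}b^l) + (a{-}a^l)b - (a{-}a^l)(b{-}b^l)$ on $\Omega$ with Cauchy--Schwarz and the change of variables $\z \leftarrow \mR\z$, and separate Cauchy--Schwarz bounds for the two products on $\bar{\Omega}$. No substantive differences.
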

Notice that outside the frequency ball $\Omega$ one expects $F',G', F'^l$ and $G'^l$ to have low energy and hence
\eqref{eq:bound_supp_interp_out} is supposed to be small.  As for \eqref{eq:bound_supp_interp_in1} and
\eqref{eq:bound_supp_interp_in2}, it is expected that within $\Omega$, the energies $E_{\Omega}(F' {-} F'^l)$ and $E_{\Omega}(G'
{-} G'^l)$ are small, as we will soon see. The proof is  is given in Appendix
\ref{sec:bounded_support_interp_app}. Here, we give the value of the above energy terms calculated for function $F$.
\begin{align}
  E_{\Omega}(F')& = \int_\Omega \left|\sinc^\alpha( T  \z) \right|^2 |F(\z)|^2 d\z, \label{eq:Eo_Fp}
\\
  E_{\Omega}(F' {-} F'^l) &= \int_\Omega \resizebox{.41\columnwidth}{!}{$\left|\sinc^\alpha( T  \z) {-} \sinc^\alpha(m T \, \z)\right|^2$}  |F(\z)|^2 d\z \label{eq:Eo_dFp}
\\
  E_{\bar{\Omega}}(F')  &= \int_{C \setminus \Omega} \sinc^{2\alpha}(T \z) \, |F(\z)|^2 d\z \label{eq:Eob_Fp1}
  \\
  &\hspace{-4pt}+ \int_{C} \left(\Phi_\alpha(T\z) - \sinc^{2\alpha}(T \z)\right) \, |F(\z)|^2 d\z, \label{eq:Eob_Fp2}
\end{align}
\begin{align}
  E_{\bar{\Omega}}(F'^l) &= \int_{\Omega} \left(\Phi_\alpha(m T \z) {-} \sinc^{2\alpha}(m T \z)\right) \, |F(\z)|^2 d\z, \label{eq:Eob_Flp}
\end{align}
where $C$ is the $l^\infty$-ball of radius $\frac{1}{2T}$, that is $\{\z \,|\, |z_i| \leq \frac{1}{2T}\}$, and the function 
$\Phi_\alpha$ is defined on the $l^\infty$-ball of radius $\frac{1}{2}$ as
\begin{align}
  \Phi_\alpha(\z) = \sum_{\i \in \Zahlen^d } \sinc^{2\alpha}(\z + \i).
\end{align}
For the 1D case ($d=1$) it has the following formula
\begin{align}
  \Phi_\alpha(z) = & \sinc^{2\alpha}(z) (1 +  \frac{\psi_{2\alpha{-}1}(1{+}z) + \psi_{2\alpha{-}1}(1{-}z)}{z^{-2\alpha}(2\alpha{-}1)!} ),
\end{align}
for $-\frac{1}{2} \leq z \leq \frac{1}{2}$, where  $\psi_k$ is the polygamma function of order $k$. For $d > 1$ we define  $\Phi_\alpha(\z) = \prod_{i=1}^d
\Phi_\alpha(z_i)$. Considering Proposition \ref{thm:DFT_samples} the
 integrals (\ref{eq:Eo_Fp}-\ref{eq:Eob_Flp}) can be computed numerically using DFT.

 It is simple to derive \eqref{eq:Eo_Fp} and \eqref{eq:Eo_dFp} from \eqref{eq:fourier_interp_freq} and
 \eqref{eq:fourier_interp_freq_low_res} considering the fact that $\sum_{\i \in \Zahlen^d} F(\z - \frac{\i}{T})$ and $\sum_{\i
   \in \Zahlen^d} F^l(\z - \frac{\i}{m T})$ are both equal to $F(\z)$ inside $\Omega$. Notice that \eqref{eq:Eo_dFp} tends to be
 relatively small for natural images as for small sized $\z$ the difference $|\sinc^\alpha( T \z) - \sinc^\alpha(m T \, \z)|$ is
 small and for larger $\z$ the frequency spectrum $F(\z)$ tends to be small. With a similar argument one can assert that the integrals \eqref{eq:Eob_Fp2}
 and \eqref{eq:Eob_Flp} are small. The integral \eqref{eq:Eob_Fp1} is small as it is over $C \setminus \Omega$. 
To obtain \eqref{eq:Eob_Fp1} and \eqref{eq:Eob_Fp2} observe that $E_{\bar{\Omega}}(F')$ is equal to
\begin{align}
  \int_{C \setminus \Omega} |F'(\z)|^2 d\z + \int_{\bar{C}} |F'(\z)|^2 d\z
\end{align}
As  $F'(\z) = \sinc^\alpha(T \, \z) F(\z)$ for all $\z \in C$, the  integral over $C \setminus \Omega$ is equal to \eqref{eq:Eob_Fp1}. For the
integral over $\bar{C}$, first for a vector $\v$ and a set $S$ define $\v {+} S \eqdef \{\v {+} \z \,|\, \z \in S \}$. Then, we have
\begin{align}
  \int_{\bar{C}} |F'(\z)|^2 d\z &= \sum_{\i \in \Zahlen^d \setminus \Zero} \int_{\i/T + C} |F'(\z)|^2 d\z
  \nonumber\\
                                &=  \sum_{\i \in \Zahlen^d \setminus \Zero}  \int_{C} |F'(\z + \i/T )|^2 d\z
  \nonumber\\
  & =  \sum_{\i \in \Zahlen^d \setminus \Zero}  \int_C |F(\z)|^2 \sinc^2(T \z + \i )  d\z,
\end{align}
which is equal to \eqref{eq:Eob_Fp2} as
$\sum_{\i \in \Zahlen^d \setminus \Zero} \sinc^2(T \z + \i ) = \Phi_\alpha(T\z) - \sinc^{2\alpha}(T \z)$. In a similar way
\eqref{eq:Eob_Flp} can be obtained; only the period is $\frac{1}{mT}$ instead of $\frac{1}{T}$. This means that, instead of $C$,
we must consider the $l^\infty$ ball $C'$ of radius $\frac{1}{mT}$.  Notice that in this case
$\int_{C' \setminus \Omega} |F'^l(\z)|^2 d\z$ is zero, as $F^l(\z) = F(\z) \cdot L(\z)$ is zero outside $\Omega$.  All the
integrals can be numerically computed using DFT in the light of Proposition~\ref{thm:DFT_samples}. Fig.~\ref{fig:lh_res_nn}(a-c)
illustrates the effect of the bound (\ref{eq:bound_supp_interp_in1}-\ref{eq:bound_supp_interp_out}) for the example of Fig. \ref{fig:lh_res_sinc}.

\begin{figure*}[float]
  \centering
  \begin{tabular}[c]{ccc}
    \includegraphics[width=.24\textwidth]{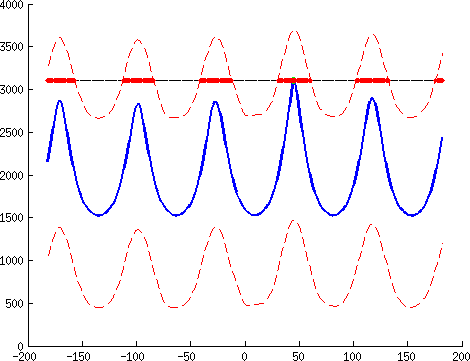}
    &
    \includegraphics[width=.24\textwidth]{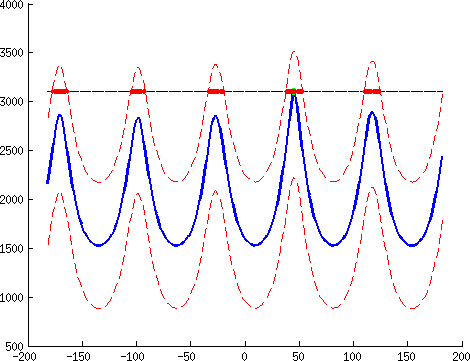}
    &
    \includegraphics[width=.24\textwidth]{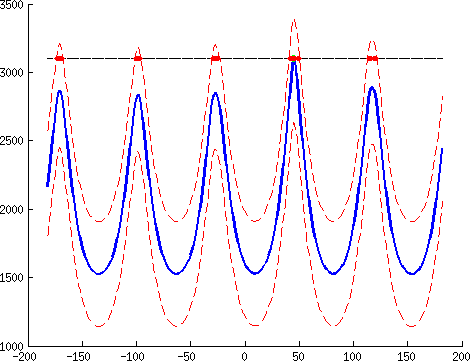}
    \\
    (a) m = 16
    &
    (b)  m = 8
    &
    (c)  m = 4
    \\
    \includegraphics[width=.24\textwidth]{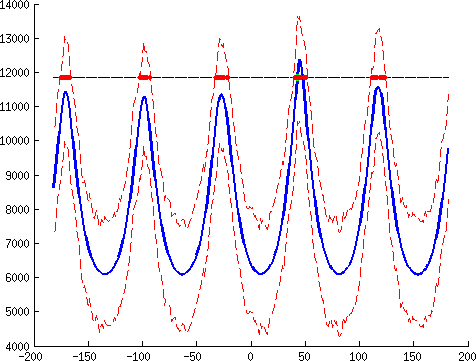}
    &
    \includegraphics[width=.24\textwidth]{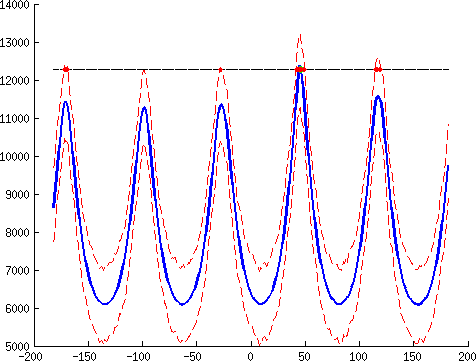}
    &
    \includegraphics[width=.24\textwidth]{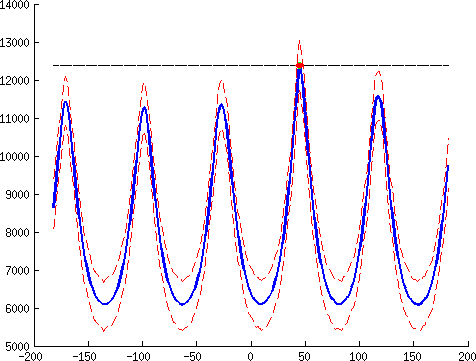}
    \\
    (d) m = 16
    &
    (e)  m = 8
    &
    (f)  m = 4
  \end{tabular}
  \caption{The bounds obtained for the nearest neighbour interpolation for the registration problem given in
    Fig.~\ref{fig:lh_res_sinc}. Images are decimated by a factor of (a) 16, (b) 8 and (c) 4. The search space reduction approach
    is the same as that of Fig.~\ref{fig:lh_res_sinc}(d). The bounds are not as effective as that of the sinc interpolation, but
    still are useful. (d,e,f) the same figures, but this time discretized integration is used instead of exact integration to
    compute the target function, and \eqref{eq:bound_supp_interp_one_sinc2} is used for the bounds. Notice that the discretized
    integration has resulted in fluctuations in the bounds which increase as the decimation rate $m$ gets larger.}
  \label{fig:lh_res_nn}
\end{figure*}

Finally, let us consider the case where the discretized integration \eqref{eq:target_discretized} is used for computing the
target function. As  discussed earlier, this approximation does not depend on how $\{f_\i\}$ is interpolated. Therefore, we
may assume that $\{f_\i\}$ is interpolated with a sinc kernel, that is $F'{=}F$ and $F'^l {=} F^l$. Under these assumptions, one can show that 
$\left|Q'(\mR,\t) - Q'^l(\mR,\t) \right|$ is bounded from above by
\begin{align}
  \sqrt{ E_{\Omega}(F) \, E_{\Omega}(G' {-} G'^l)}
  + \sqrt{E_{\bar{\Omega}}^{\phantom{l}}(F) \, E_{\Omega' \setminus \Omega}(G')}  \label{eq:bound_supp_interp_one_sinc2}
\end{align}
where $\Omega'$ is the ball of radius $\frac{\sqrt{d}}{2T}$ (see Appendix \ref{sec:bounded_support_interp_app_Fsinc}).  The
effect of using this bound is depicted in Fig.~\eqref{fig:lh_res_nn}(d,e,f). The bound is generally smaller than
(\ref{eq:bound_supp_interp_in1}-\ref{eq:bound_supp_interp_out}). However, here the discretization error for computing the target integral has been
neglected. This is evident from the slight fluctuations in the bounds in Fig.~\eqref{fig:lh_res_nn}(d,e) for very low resolutions.


\subsection{Low-resolution to high-resolution registration}
\label{sec:low_high_reg}
In this section we examine the situation where only one of the images $\{f_\i\}$ is decimated. We study how this affects the amount of
computations and the quality of the bounds. We also consider the case where the other image $\{g_\i\}$ is upsampled.

Assume that  $\{f_\i \}$ is decimated by a factor of $m$ to create the low-resolution image $\{f^l_\i\}$. Using 
\eqref{eq:interp} a corresponding continuous image can be obtained as 
\begin{align}
  \label{eq:interp_low_res}
  f'^l(\x) = \sum_{\i \in P^l} f^l_{\i} ~ s(\frac{\x}{mT} - \i),
\end{align}
where $P^l$ is the grid of low-resolution pixels. For the image $\{g_\i \}$ we do not change the resolution, and thus, we have
$g'(\x) = \sum_{\i \in P} g_{\j} ~ s(\frac{\x}{T} - \j)$. Now, there are two ways to compute the correlation target between
$f'^l(\x)$ and $g'(\mR(\x{+}\t))$: the exact way like \eqref{eq:target_discrete} and the approximate way by discretizing the
integral like \eqref{eq:target_discretized}. We leave it to the reader to check that for computing the exact integral similar to
\eqref{eq:target_discrete}, lowering the resolution of just one image does not significantly reduce the  computations. This is because
while the number of pixels in the first image is reduced by a factor of $m^d$, the size of the neighbourhood  around
each transformed pixel required for the computation of $W_\mR(\d)$ is increased by the same factor. However, if we discretize the integral, like in \eqref{eq:target_discretized} 
we get 
\begin{align}
Q'^{hl}(\mR,\t) &=  \int f'^l(\x) \, g'(\mR (\x + \t))\, d\x \label{eq:target_hl}
\\
&\hspace{-6pt}\approx (m T)^d \sum_{\i \in P^l} f_{\i} \sum_{\j \in P}    \, g_{\j}  \, s(\mR (m \i + \frac{\t}{T}) - \j) \, d\x. \label{eq:target_hl_discretized}
\end{align}
The above shows that if $s$ has a compact support then for every $\i$ we only need to consider those pixels $\j$ which are in
the corresponding neighbourhood of $\mR (m \i + \frac{\t}{T})$. The size of this neighbourhood is the same as that of 
 \eqref{eq:target_discretized}. Since the first sum is over the low-resolution pixel grid $P^l$ which has
about $m^d$ times less pixels than $P$, we see that the amount of required computation when lowering the resolution 
of just one image is the same as when both images are decimated. This, of course, comes at the cost of having an approximate integral
by discretization. 

Now, let us see what happens to the bounds. If the interpolation is done using the sinc kernel
($s = \sinc$, and thus $f'^l {=} f^l$ and $g' {=} g$), then  the target function \eqref{eq:target_hl} is equal to
$Q^l(\mR,\t) {=} \int f^l(\x) \, g^l(\mR (\x + \t))\, d\x$, and hence, the bound \eqref{eq:LH_error} does not change. This is due
to the fact that $f^l(\x)$ and $g^h(\mR (\x + \t))$ are orthogonal, and thus, the correlation between $f^l(\x)$ and $g(\mR (\x +
\t)) = g^l(\mR (\x + \t)) + g^h(\mR (\x + \t))$ is the same as the correlation between $f^l(\x)$ and $g^l(\mR (\x + \t))$. 
However, for other interpolation techniques the bounds can be further tightened. In a similar way to Proposition \ref{thm:bound_supp_interp}
we can obtain
\begin{align}
  \left|Q'(\mR,\t) - Q'^{hl}(\mR,\t) \right| & \leq \sqrt{ E_{\Omega}(G') \, E_{\Omega}(F' {-} F'^l)} 
  \nonumber\\
   & \hspace{-2cm}+ \sqrt{E_{\bar{\Omega}}^{\phantom{l}}(F') \, E_{\bar{\Omega}}(G')} + \sqrt{E_{\bar{\Omega}}(F'^l) \, E_{\bar{\Omega}}(G')}, \label{eq:bound_supp_interp_hl_out}
\end{align}
which is generally smaller than the bound in Proposition~\ref{thm:bound_supp_interp}. If the discretized integral is used, similar to the 
way \eqref{eq:bound_supp_interp_one_sinc2} is obtained, one can obtain the following bound
\begin{align}
  \sqrt{E_{\bar{\Omega}}^{\phantom{l}}(F) \, E_{\Omega' \setminus \bar{\Omega}}(G')},  \label{eq:bound_supp_interp_one_sinc_hl2}
\end{align}
which is obtained by replacing $G'^l$ with $G'$ in \eqref{eq:bound_supp_interp_one_sinc2}. Fig.~\ref{fig:h_lh_res_nn}(a)
shows an example of applying this bound. 

Now, let us see what happens if the second image $\{g_\i\}$ is upsampled by a factor of $p \geq 2$ to obtain $\{g^u_\i\}$.  We
assume that the new samples are obtained by the natural sinc interpolation. Therefore, the corresponding continuous function remains 
the same, which means $g^u(\x)=g(\x)$. Similarly to the way we obtained
\eqref{eq:fourier_interp_freq}, if  $\{g^u_\i\}$ is interpolated by the nearest neighbour ($\alpha {=} 1$) or
the bilinear ($\alpha {=} 2$) method  to obtain the interpolated image $g'^u$, the corresponding Fourier transform will be
\begin{align}
  \label{eq:fourier_interp_freq_upsampled}
  G'^u(\z) =  \sinc^\alpha(\frac{T \z}{p}) \cdot \sum_{\i \in \Zahlen^d} G(\z - \frac{p \i}{T}).
\end{align}
Notice that  $\sum_{\i \in \Zahlen^d} G(\z - \frac{p \i}{T})$ is the periodic repetition of  $G(\z)$  with  period 
$p/T$. Since $G(\z)$ is bandlimited to $\frac{1}{2T}$ in every dimension, this means that for $p \geq 2$ there exist a lot of
\emph{empty spaces} in the spectrum \eqref{eq:fourier_interp_freq_upsampled} in which $G'^u(\z) = 0$.  Specially, $G'^u(\z)$ is zero when
$\frac{1}{2T} {<} \norm{\z}_\infty {<}\frac{p}{2T}$. Thus, $G'^u(\z) {=} 0$ inside
$\Omega' {\setminus} C$, where $C$ is the $l^\infty$-ball of radius $\frac{1}{2T}$, and $E_{\Omega' \setminus {\Omega}}(G'^u) =
E_{C \setminus {\Omega}}(G'^u)$. Now, if we derive \eqref{eq:bound_supp_interp_one_sinc_hl2} for $G'^u(\z)$ instead of $G'(\z)$
we will get the following bound
\begin{align}
  \sqrt{E_{\bar{\Omega}}^{\phantom{l}}(F) \, E_{C \setminus {\Omega}}(G'^u)}.  \label{eq:bound_supp_interp_one_sinc_hl_upsample}
\end{align}
Notice that for all $\z \in C$ we have 
\begin{align}
  G'^u(\z) = \sinc^\alpha({T \z}/{p}) \, G(\z).
\end{align}
Therefore, inside $C$, $G'^u(\z)$ approaches to $G(\z)$ as $p$ gets larger. This means that as 
$p$ increases,  \eqref{eq:bound_supp_interp_one_sinc_hl_upsample} gets closer  to the original bound $E_{fg}^h$  in
\eqref{eq:LH_error_l2}, since $E_{\bar{\Omega}}^{\phantom{l}}(F) = \int_{\bar{\Omega}} |F(\z)|^2 d\z = \int
\left(f^h(\x)\right)^2 d\x$. Nonetheless, one should bear in
mind that here we have neglected the discretization error for computing the target integral\footnote{The reader might have
  notice that the bound \eqref{eq:bound_supp_interp_one_sinc_hl_upsample} is actually smaller than \eqref{eq:LH_error_l2}. This,
  however, this does not make it a better bound, since the corresponding target functions are
  different.}. Fig.~\ref{fig:h_lh_res_nn}(b,c) shows examples of using \eqref{eq:bound_supp_interp_one_sinc_hl_upsample} for $p=2$ and $p=4$. We have observed
experimentally that upsampling by a factor larger than 2 does not help  much with reducing the search space. We, therefore,
use $p=2$ in all our experiments.

\begin{figure*}[t!]
  \centering
  \begin{tabular}[c]{ccc}
    \includegraphics[width=.24\textwidth]{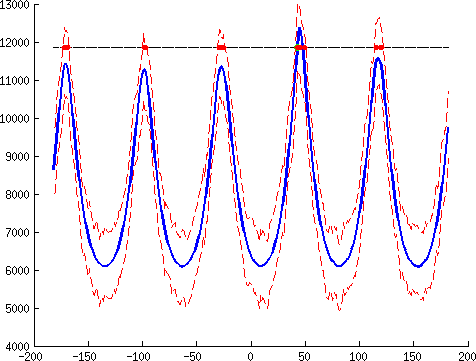}
    &
    \includegraphics[width=.24\textwidth]{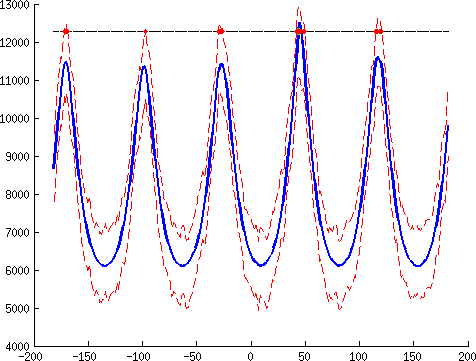}
    &
    \includegraphics[width=.24\textwidth]{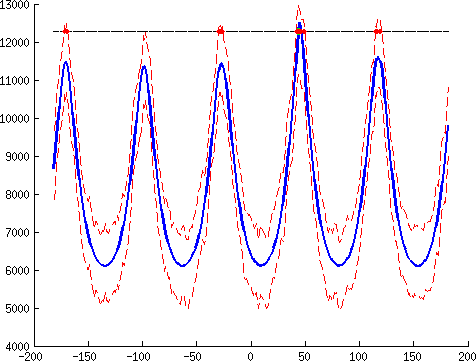}
    \\
    (a) m = 16, p = 1
    &
    (b) m = 16, p = 2
    &
    (c) m = 16, p = 4
  \end{tabular}
  \caption{Bounds for low-resolution to high-resolution registration with nearest neighbour interpolation and discretized
    computation of the target integral for the registration problem of Fig.~\ref{fig:lh_res_sinc}. In all cases, the image $f$
    is decimated by a factor of $m=16$, and $g$ has been upsampled by a factor of (a) $p=1$, (b) $p=2$ and (c) $p=4$. Changing
    $p$ from 2 to 4 has not made a noticeable difference.}
  \label{fig:h_lh_res_nn}
\end{figure*}

\subsection{Obtaining Tighter bounds}
\label{sec:tighter}

\subsubsection{Partitioning to radial frequency bands}
\label{sec:radial_freq_bands}
We can obtain slightly tighter bounds by dividing the high-frequency area into radial bands.  From \eqref{eq:LH_error_l1} we have
\begin{align}
| Q(\mR,\t) - Q^l(\mR,\t)| = \left|\int_{\bar{\Omega}} F(\z) \, G(\mR \z (\x)) \, e^{2\pi i \t^T \x}\, d\z \right| 
\end{align}
where $\bar{\Omega}$ is the area outside the ball of radius $\frac{1}{2mT}$ around the origin. The above is obtained using the
Parseval's theorem given the fact that $F^h(\z) {=} H(\z)\,F(\z)$ and $G^h(\z) {=} H(\z)\,G(\z)$ where the high-pass filter $H$ was
defined in \eqref{eq:high_pass}.  Now, we partition $\bar{\Omega}$ to radial bands $\Omega_1, \Omega_2, \ldots, \Omega_P$ as
illustrated in Fig. \ref{fig:nested}. More precisely, $\Omega_i = \{\z \,|\, r_i \leq \norm{\z} < r_{i+1}\}$ for radii $r_1 <
r_2 < \cdots < r_{P+1}$. Then we have
\begin{align}
| Q(\mR,\t) - Q^l(\mR,\t)| & = \left| \sum_{k=1}^P \int_{\Omega_k} F(\z) \, G(\mR \z (\x)) \, e^{2\pi i \t^T \x}\, d\z \right| 
\nonumber\\
&  \hspace{-40pt} \leq \sum_{k=1}^P {\textstyle \sqrt{\int_{\Omega_k} |F(\z)|^2 d\z} \sqrt{\int_{\Omega_k} |G(\z)|^2 d\z}} \label{eq:tight_freq}
\end{align}
where \eqref{eq:tight_freq} is obtained by using the triangle inequality followed by the Cauchy-Schwarz inequality, and a change
of variables $\z \leftarrow \mR\z$ in the second integral. Notice that $\mR\z \in \Omega_k$ if and only if $\z \in \Omega_k$.
The bound \eqref{eq:tight_freq} can be computed numerically using DFT (see Proposition~\ref{thm:DFT_samples}).  We have tighter
bounds if more frequency bands are used. The extreme case is when each band only contains DFT samples with the same distance to
the origin.  On the other extreme, when $P=1$, then \eqref{eq:tight_freq} is reduced to \eqref{eq:LH_error_l2}. Our experiments
show that for natural image pairs with similar spectra \eqref{eq:tight_freq} is only slightly better than \eqref{eq:LH_error_l2}. For
instance, in the example of Fig.~\ref{fig:lh_res_sinc}, the bound \eqref{eq:LH_error_l2} is equal to 372.6 while
\eqref{eq:tight_freq} is equal to 356.4 in the best case (maximum $P$). However, dividing the frequency domain into radial
bands can be a useful trick in similar problems. We will use it for obtaining Lipschitz bounds in
Sect. \ref{sec:lipschitz_bounds}.

\begin{figure}[t!]
  \centering
  \begin{center}
    \includegraphics[width=.24\textwidth]{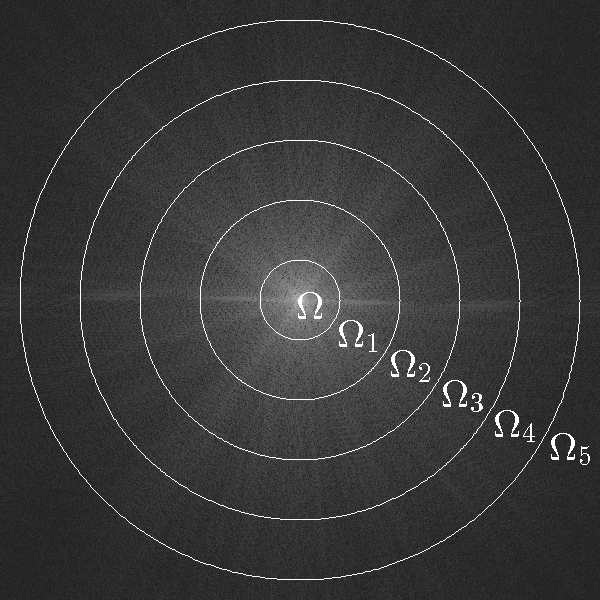}
  \end{center}
  \caption{The frequency domain outside $\Omega$  has been partitioned into nested radial frequency bands   $\Omega_1, \Omega_2, \ldots,
\Omega_P$ ($P=5$). 
}
  \label{fig:nested}
\end{figure}

\subsubsection{Hölder's inequality}
Instead of using the Cauchy-Schwarz inequality $\langle f,g \rangle \leq \norm{f} \norm{g}$ in the derivation of the bounds, one may
use the more general Hölder's inequality which asserts $\langle f,g \rangle \leq \norm{f}^\alpha \norm{g}^\beta$ for any
$\alpha,\beta \in [1,\infty]$ such that $\frac{1}{\alpha} + \frac{1}{\beta} = 1$. This can be done by minimizing
$\norm{f}^\alpha \norm{g}^\beta$ with respect to $\alpha$ and $\beta$. This is specially useful when of one the images is
considerably smaller or sparser than the other.

\section{A basic algorithm}
\label{sec:basic_algorithm}
We showed that the search space for high-resolution images can be limited given the low-resolution target values. This inspires
a multiresolution search strategy whereby the target values at each resolution level further restrict the search space for the
next (higher) level of resolution, as formalized in Algorithm~\ref{alg:multi_res}. This is a very basic algorithm which searches
a grid of the registration parameters $\tG$. Here, we use $l$ as an index to represent the \emph{resolution level}, as opposed
to the previous section, where $l$ was a symbol indicating \emph{low-resolution}. The resolution levels are
$l= 0,1,\ldots,l_{\text{max}}$, where $l{=}0$ and $l_{\text{max}}$ respectively correspond to highest and lowest
resolutions. The target function for each resolution is represented by $Q^l$. Therefore, $Q^0(\mR,\t) = Q(\mR,\t)$.  At each
resolution level, the algorithm finds the maximizer $(\mR^{l*},\t^{l*})$ of the target function $Q^l$ over $\tG$ (line
\ref{alg:line:max}), updates $Q^*$, the so-far best target value (line \ref{alg:line:update_Qs}), and discards some elements of
$\tG$ based on $Q^*$ and the inter-resolution bounds $B_{\text{res}}^l$ (line \ref{alg:line:rule_out}). Here, $B_{\text{res}}^l$
can be any of the bounds introduced in the previous section. The algorithm might not seem interesting from a practical
perspective as the target function has to be evaluated at every element of the grid, at least at the lowest
resolution. Nonetheless, it demonstrates a strategy purely based on our inter-resolution bounds, and gives insights about how to
reduce computations in globally optimal registration algorithms where the whole parameter space is searched, which is the
subject of the next section.

\begin{algorithm}
  \caption{A basic multiresolution rigid motion space search algorithm for intensity-based registration. }\label{alg:multi_res}
  
  \begin{algorithmic}[1]
    \Require 
    \Statex $\{f_\i^l\},\{g_\i^l\}$: registration image pairs for each resolution $l= 0,1,\ldots,l_{\text{max}}$,
    \Statex $B^l_{\text{res}}$: inter-resolution bounds for each $l$,
    \Statex $\tG$: a search grid  of parameters $(\mR,\t)$,
    \Statex $m_0 {<} m_1 {<} \cdots {<} m_{l_{\text{max}}}$: A set of decimation rates ($m_0{=}1$),
    \Statex $\mR_0,\t_0$:  an initial solution (optional).  
    
    \State $Q^* \gets Q(\mR_0,\t_0)$ or $-\infty$   \Comment{so-far best target value}
    \For{$l \gets l_{\text{max}} $ \textbf{downto} $0$}  \Comment{low to high resolution}
    \State $\mR^{l*},\t^{l*} \gets \arg\max_{\mR,\t \in \tG} Q^{l}(\mR,\t)$\label{alg:line:max}
    \State $Q^* \gets \max(Q^*, Q(\mR^{l*},\t^{l*}))$ \label{alg:line:update_Qs}
    \State $\tG \gets \tG \setminus \{\mR,\t \in \tG \,|\,  Q^{l}(\mR,\t) < Q^* {-} B_{\text{res}}^l\}$ \label{alg:line:rule_out}

     \EndFor
     \State \Return $\mR^*,\t^*$

    \end{algorithmic}
  \end{algorithm}

  We test Algorithm~\ref{alg:multi_res} for the registration of the image pairs in Fig.~\ref{fig:lh_res_sinc}.  First, we
  consider only the rotation parameter using a grid of rotation angles between -180 to 180 degrees with 0.1 degree intervals.
  Six levels of resolution has been used with decimation factors 32, 16, 8, 4, 2 and 1.  For further implementation details we
  refer the reader to Sect. \ref{sec:technical}.  It takes about 0.2 seconds for the algorithm to find the best grid
  element. The second column of Table \ref{tab:computation_per_res} shows the percentage of grid points ruled out at each
  resolution level by the bounds (in line \ref{alg:line:rule_out} of Algorithm~\ref{alg:multi_res}). It is interesting that more
  than 99 percent of the grid elements are thrown away at the lowest  resolution where the images are decimated by a
  factor of 32, and thus, have  about 1000 times less pixels. It is also evident from the last row of the table that the best grid
  point has been already found at the second-highest level of resolution ($m=2$). The third column of Table
  \ref{tab:computation_per_res} shows the same figures for when translation is also included. The image $g$ undergoes a translation
  of $(10,40)$ pixels on top of the current rotation. Here, the rotation grid has 1 degree intervals  and the translation
  grid is in polar coordinates where the angle of translation has 1 degree intervals and the length of translation is half of
  the image size in each direction with intervals of 1 pixel. It took 31 minutes  to search the whole
  grid. Here, 99.9 percent of the grid values were ruled out at the lowest resolution level, which means a huge decrease in
  computations. We should mention, however, that for the image of Fig.~\ref{fig:lh_res_sinc} the energy is highly concentrated
  in lower frequency bands, and the local maxima exhibit sharp peaks in the target function. Depending on these factors, the
  multiresolution scheme might lead to different efficiency gains in different images.

\begin{table}[h!]
  \centering
  \begin{tabular}{c|c|c}
    decimation & ruled out (rot.) & ruled out (rot. + trans.)
    \\
    \hline
    $m=32$ & 99.25 \%   &99.97 \% 
    \\
    $m=16$ & 0.500 \%   &0.0220 \%
    \\
    $m=8$ & 0.139 \%   &0.0055 \%
    \\
    $m=4$ & 0.028 \%   & 7.0e-6 \%
    \\
    $m=2$ & 0.056 \%  &  1.8e-6 \%
    \\
    $m=1$ &  0 \%&    0 \%
  \end{tabular}
  \caption{The percentage of grid elements ruled out at each level of resolution for a search grid of rotation (column 2) and rotation plus translation (column 3) parameters.}
 \label{tab:computation_per_res}
\end{table}

\section{Embedding into a global optimization framework}
\label{sec:lipschitz_optim}
In order to demonstrate the practical value of our results, we present an example of how the multiresolution technique can be
integrated into the a globally optimal search algorithm. Here, we consider the Lipschitz optimization framework
\cite{Hansen1995_lipschitz}, a branch and bound approach which exploits the concept of Lipschitz continuity to obtain upper
bounds within each sub-region.  A function $h\colon D {\subseteq} \Real^n {\rightarrow} \Real$ is called Lipschitz continuous if there
exists a constant $L$ such that for any $\x_1,\x_2 \in D$
\begin{align}
  \label{eq:lipschitz_bound}
  |h(\x_2) - h(\x_1)| \leq L\,\norm{\x_2-\x_1}.
\end{align}
The smallest such $L$ is called the Lipschitz constant. For differentiable $h$ the Lipschitz constant
is the supremum of the gradient size within $D$. If $h(\x,\y)$ has Lipschitz constants $L_\x$ and $L_\y$ with respect to $\x$
and $\y$ respectively, then
\begin{align}
  \label{eq:lipschitz_bound_combine}
  \scalebox{.98}{$\left|h(\x_2,\y_2) - h(\x_1,\y_1)\right| \leq L_\x\norm{\x_2{-}\x_1} + L_\y\norm{\y_2{-}\y_1}$}.
\end{align}
In Lipschitz optimization, the target function is computed at the centre of each sub-domain, and then, using
\eqref{eq:lipschitz_bound} or \eqref{eq:lipschitz_bound_combine} an upper-bound is found on the target values
within the sub-domain. The sub-domain is either rejected or split based on whether or not the upper bound is smaller than the
currently best target value. 


\subsection{Lipschitz bounds}
\label{sec:lipschitz_bounds}
The efficiency of the algorithm highly depends on the quality of the Lipschitz bounds. An example of Lipschitz bounds for
intensity-based registration is given in \cite{Cremers2008_shape_priors} in the context of shape models. Here, we present better
bounds by looking at the target function in the frequency domain. According to our experiments, our bounds are usually smaller
by at least one order of magnitude. Let $\F(\z) = [\mathrm{real}(F(\z)), \mathrm{imag}(F(\z))]^T \in \Real^2$, and similarly
define $\G(\z)$. Then using the fact that $F(-\z) = \overline{F(\z)}$ and $G(-\z) = \overline{G(\z)}$ for real-valued $f$ and
$g$, we can rewrite \eqref{eq:target_freq} as
\begin{align}
\label{eq:target_freq_matrix}
Q(\mR,\t) = \int \F(\z)^T \, \mGamma(2\pi \t^T \z) \, \G(\mR \z) \, d\z.
\end{align}
where $\mGamma(\phi)$ is the 2D rotation matrix of angle $\phi$, that is 
\begin{align}
  \label{eq:rot_mat}
  \mGamma(\phi) = \Mat{cc}{\cos(\phi) & -\sin(\phi) \\ \sin(\phi) & \cos(\phi)}.
\end{align}
Now, the gradient of the target with respect to $\t$ is
\begin{align}
  \label{eq:d_dt_target_freq}
  \frac{d}{d \t} Q(\mR,\t) = 2 \pi \int \z \, \F(\z)^T \, \mGamma'(2\pi \t^T \z) \, \G(\mR \z) \, d\z.
\end{align}
where $\mGamma'(\phi) = \frac{d}{d\phi} \mGamma(\phi)$, which is also a rotation matrix. To obtain a fairly good upper bound on
the magnitude of \eqref{eq:d_dt_target_freq} we first divide the frequency domain into radial bands $\Omega_0, \Omega_1,
\ldots, \Omega_P$, where $\Omega_i = \{\z \,|\, r_i \leq \norm{\z} < r_{i+1}\}$ for radii $0=r_0 < r_1 < \cdots < r_{P+1}$. This is
similar to the approach taken in Sect. \ref{sec:radial_freq_bands}, only instead of partitioning the area outside $\Omega$, we
partition the entire frequency domain (look at Fig. \eqref{fig:nested} and replace $\Omega$ by $\Omega_0$).  Now, from \eqref{eq:d_dt_target_freq} we can write
\begin{align}
  \label{eq:lipschitz_t}
  \norm{\frac{d}{d \t} Q(\mR,\t)} & \leq 2 \pi \sum_{i=0}^P \int_{\Omega_i} \norm{\z} \, |\F(\z)^T \, \mGamma'(2\pi \t^T \z) \, \G(\mR \z)| \, d\z
  \nonumber\\
  &\hspace{-1.5cm} \leq 2 \pi \sum_{i=0}^P (\inf_{\Omega_{i}} \norm{\z}) \int_{\Omega_i}  \, |\F(\z)^T \, \mGamma'(2\pi \t^T \z) \, \G(\mR \z)| \, d\z
  \nonumber\\
  &\hspace{-1.5cm} \leq 2 \pi \sum_{i=0}^P r_{i+1} \int_{\Omega_i}  \, \norm{\F(\z)} \norm{\G(\mR \z)} \, d\z
  \nonumber\\
  &\hspace{-1.5cm} \leq 2 \pi \sum_{i=0}^P r_{i+1} \sqrt{\int_{\Omega_i} \norm{\F(\z)}^2 d\z} \sqrt{\int_{\Omega_i} \norm{\G(\z)}^2 d\z}.
\end{align}
In the third line of the above we used the fact that $\mGamma'(2\pi \t^T
\z)$ is a rotation matrix.  Also, notice that $\norm{\F(\z)}$ is simply equal to the magnitude of the complex quantity
$F(\z)$. The integrals can be computed similarly to Sect. \ref{sec:radial_freq_bands}. What makes the above a good bound is that
large values of $r_{i+1}$ are multiplied by high-frequency components of $F$ and $G$, which are typically small in natural
images. This has been possible due to the division of the frequency domain into radial bands $\Omega_i$. 

The parameterization of the rotation matrix can be quite different in 2D and 3D. Assume that $\rho {\in} \Real$ is a single parameter representing 
one degree of freedom in the rotation space. 
Then
\begin{align}
  \label{eq:d_rho_target_freq}
  \frac{d}{d \rho} Q(\mR,\t) =  \int \F(\z)^T \, \mGamma(2\pi \t^T \z) \, J_\G(\mR \z) \frac{d \mR}{d \rho} \z \, d\z,
\end{align}
where $J_\G$ is the Jacobian matrix of $\G$. It follows
\begin{align}
  |\frac{d}{d \rho} Q(\mR,\t)| & \leq \sum_{i=0}^P \int_{\Omega_i}  | \F(\z)^T \, \mGamma(2\pi \t^T \z) \, J_\G(\mR \z) \frac{d \mR}{d \rho} \z | \, d\z
  \nonumber\\
  &\hspace{-1.5cm} \leq \sum_{i=0}^P  \sqrt{\int_{\Omega_i} \!\!\norm{\F(\z)}^2 d\z} \sqrt{\int_{\Omega_i} \norm{J_\G(\z) \frac{d \mR}{d \rho} \mR^{-1}  \z}^2 d\z}.   \label{eq:lipschitz_rho}
\end{align}
For 2D the only parameter is the angle of rotation $\rho=\theta$. Notice that $\frac{d \mR}{d \theta}
\mR^{-1}$ is equivalent to a 90 degrees rotation counterclockwise. Thus, we have
\begin{align}
  \label{eq:d_dth_Ri_z}
  J_\G(\z) \frac{d \mR}{d \theta} \mR^{-1}  \z = J_\G(\z) \z^{\perp},
\end{align}
where $\z^{\perp} = [-z_2, z_1]^T$ is the vector $\z$ rotated by 90 degrees. 

For the more complex 3D case  we use the upper bound:
\begin{align}
  \label{eq:lipschitz_rho_bound2}
  \norm{J_\G(\z) \frac{d \mR}{d \rho} \mR^{-1}  \z} \leq \norm{J_\G(\z)} \norm{\frac{d \mR}{d \rho} \mR^{-1}  \z},
\end{align}
where $\norm{J_\G(\z)}$ is the spectral norm of the matrix $J_\G(\z)$. We use  the axis-angle
representation of the rotation, with the unit vector $\vOmega$ representing the axis of rotation and $\theta$ representing the rotation angle.  The rotation
can be formulated as
\begin{align}
  \label{eq:rotation_axis_angle}
  \mR\z = \cos\theta \, (\mI{-}\vOmega\vOmega^T)\,\z + \sin\theta \, (\vOmega {\times} \z) + \vOmega\vOmega^T\,\z
\end{align}
where $\times$ denotes the cross product. Similarly, $\mR^{-1}\z$ is obtained by replacing $\theta$ in the above with $-\theta$. 
Simple calculations give  
\begin{align}
  \label{eq:}
  \frac{d \mR}{d\theta} \mR^{-1}\z = \vOmega {\times} \z \leq \norm{\z}.
\end{align}
Now, we parameterize $\vOmega$ in the spherical coordinates using two angles
$\phi$ and $\psi$, as $\vOmega = [\cos\phi \cos\psi,~\sin\phi \cos\psi,~\sin\psi]^T$. Let $\vTau = [-\sin\phi ,~\cos\phi ,~0]^T$
and $\vNu = [-\cos\phi \sin\psi,~-\sin\phi \sin\psi,~\cos\psi]^T$, and notice that $\vOmega$, $\vTau$ and $\vNu$ form an
orthonormal basis with $\vOmega \times \vTau = \vNu$. Further, we have 
\begin{align}
  \frac{d\vOmega}{d\phi} = \cos\psi~\vTau, ~~~
  \frac{d\vOmega}{d\psi} = \vNu.
\end{align}
By taking $s = \sin(\frac{\theta}{2})$ and $c = \cos(\frac{\theta}{2})$, and using $\vNu {\times} \z = (\vTau\vOmega^T {-} \vOmega \vTau^T)\,\z$ we get
\begin{align}
  \label{eq:d_dphi_axis_angle}
  \frac{d \mR}{d\psi} \z &= 2 s^2 \, (\vOmega\vNu^T{+}\vNu\vOmega^T)\,\z + 2 c s\, (\vNu {\times} \z)   
  = 2 s \, \mU \mS  \mU^T \z,
\end{align}
where $\mU = [\vOmega, \vTau, \vNu]$ and 
\begin{align}
  \label{eq:U_S}
  \mS = \Mat{ccc}{0 & -c & s\\c  &0 & 0 \\ s & 0 &0}.
\end{align}
As $\mU$ is orthogonal and $\mS$ has spectral norm 1, we have 
\begin{align}
  \label{eq:d_dpsi_axis_angle_bound}
  \norm{\frac{d \mR}{d\psi} \mR^{-1}\z} &= \norm{2 s \, \mU \mS  \mU^T \mR^{-1}\z} \leq 2 \,|\sin({\theta}/{2})| \norm{\z}.
\end{align}
In a similar way, for $\phi$ one can obtain 
\begin{align}
  \label{eq:d_dphi_axis_angle_bound}
  |\frac{d \mR}{d\phi} \mR^{-1}\z| \leq 2 \, |\sin({\theta}/{2})| \, |\cos(\psi)| \, \norm{\z}. 
\end{align}
Lipschitz bounds may be obtained from \eqref{eq:d_dpsi_axis_angle_bound} and \eqref{eq:d_dphi_axis_angle_bound} by simply
maximizing $|\sin({\theta}/{2})|$ and $|\cos(\psi)|$ within each sub-region. By substituting into \eqref{eq:lipschitz_rho}, one 
realizes that the second square root linearly grows with the size of $\z$. However, this is compensated by the fact that
larger vectors $\z$ lie inside higher frequency bands $\Omega_i$, and thus, are multiplied by high-frequency components of $F$
(and $J_\G$) which are typically small.
Another important observation is that the Lipschitz bounds get smaller as images are more decimated. This can be easily seen by looking at
\eqref{eq:lipschitz_t} and \eqref{eq:lipschitz_rho}, and noticing that the spectrum of a low-resolution image is obtained by zeroing
out the  high-resolution spectrum  outside a certain ball.

\subsection{A multiresolution branch and bound algorithm}
\label{sec:multires_Lipschitz}
We divide the registration parameter space into a number of hypercubes. In Lipschitz optimization, each hypercube either gets
rejected or gets split depending on whether or not the corresponding upper bound on the target value is smaller than the largest
 target value found so far. The idea behind our multiresolution algorithm is to reduce the computations by using lower
resolution images, as much as possible, to make the reject/split decision. To see how this is done, consider a certain cube
$\tC$ whose centre corresponds to parameters $\mR_c,\t_c$. Like Sect. \ref{sec:basic_algorithm}, we  have a set of resolution levels $l=
0,1,\ldots,l_{\text{max}}$. For each resolution level $l$ there are two different bounds: the inter-resolution bound $B^l_{\text{res}}$ and
the Lipschitz bound $B^l_{\text{Lip}}$. The Lipschitz bound can be obtained from \eqref{eq:lipschitz_bound} or
\eqref{eq:lipschitz_bound_combine}, and bounds the variation of $Q^l(\mR,\t)$ from $Q^l(\mR_c,\t_c)$ within the cube $\tC$, that is
$|Q^l(\mR,\t) - Q^l(\mR_c,\t_c)| \leq B^l_{\text{Lip}}$ for all $(\mR,\t) \in \tC$. The inter-resolution bound gives $|Q^l(\mR_c,\t_c)
- Q(\mR_c,\t_c)| \leq B^l_{\text{res}}$. Thus, we can define a total bound $B^l_{\text{total}} {=}
B^l_{\text{res}} + B^l_{\text{Lip}}$ giving
\begin{align}
  \label{eq:total_bound}
  |Q(\mR,\t) - Q^l(\mR_c,\t_c)| \leq B^l_{\text{total}}~~~~\text{for all}~(\mR,\t) \in \tC.
\end{align}
Therefore, if $Q^l(\mR_c,\t_c) {+} B^l_{\text{total}}$ is smaller than $Q^*$, the currently best target value, we can safely
discard  $\tC$. Otherwise, we test the above for the next (higher) resolution level $l{-}1$. More precisely, our strategy is as
follows.  First, for any cube $\tC$ we define $l_{\text{min}}$ as $\arg\!\min_l B^l_{\text{total}}$. For each
cube, we start from  $l=l_{\text{max}}$ (the lowest resolution) and examine the upper bounds for $l=l_{\text{max}}, \ldots, l_{\text{min}}$ in order. 
If at any resolution $l$ it happened that $Q^l(\mR_c,\t_c) +
B^l_{\text{total}} \leq Q^*$, then we discard  $\tC$. Otherwise, we split the cube. We do not go further below
 $l_{\text{min}}$ as otherwise $B^l_{\text{total}}$ would become larger and there would be a high change of $Q^l(\mR_c,\t_c) +
B^l_{\text{total}}$ falling above $Q^*$ again.

Notice that, $B^l_{\text{res}}$ is equal or proportional to the residual of the energy after low-pass filtering. Thus, we have
$0 {=} B^0_{\text{res}} \leq B^1_{\text{res}} \leq \ldots \leq B^{l_{\text{max}}}_{\text{res}}$. On the contrary, we observed in
the previous section that $B^0_{\text{Lip}} \geq B^1_{\text{Lip}} \geq \ldots \geq B^{l_{\text{max}}}_{\text{Lip}}$. On the
first iterations of the algorithm, the cubes are large, and therefore, $B^l_{\text{Lip}}$-s dominate
$B^l_{\text{res}}$-s. Consequently, $l_{\text{min}}$ becomes equal or close to $l_{\text{max}}$, and only lower resolution
images are used for making the reject/split decision. As the algorithm goes on, the cubes get divided into smaller cubes. Thus,
$B^l_{\text{Lip}}$ becomes smaller compared to $B^l_{\text{res}}$, and $l_{\text{min}}$ moves towards zero. Therefore, higher
resolution images are examined. Due to this strategy, a large portion of the search space is explored in the lower-resolution
domains.

Algorithm~\ref{alg:lipschitz} describes our approach.  It uses the \emph{breadth-first search} strategy, using a \emph{queue} data
structure. We use a special element called \textbf{level-delimiter} to separate different levels of the search tree in the
queue. After searching all cubes of each level, we have a candidate optimal parameter $(\mR^{l*},\t^{l*})$, which is the
maximizer of $Q^{l_\text{min}}(\mR_c{,}\t_c){-}B^{l_\text{min}}$ among all cubes in this level (see line
\ref{alg:line:low-res-max} of the algorithm).  We examine the original  cost function $Q {=} Q^0$
on this candidate parameter set. If $Q(\mR^{l*},\t^{l*}) > Q^*$, then $Q^*$, $\mR^{*}$ and $\t^{*}$ are updated (line
\ref{alg:line:update-level}).


It is not always necessary to split across every dimension of the cube, when a split is needed. Notice that the bound $B^l_{\text{Lip}}$ is the sum of
the bounds for each parameter, as shown in \eqref{eq:lipschitz_bound_combine} for two parameters. We sort the bounds descendingly,
start from the parameter with the biggest bound, and keep splitting across the parameters, until the sum of the rest of the bounds
are smaller than $Q^* - B^{l_{\text{min}}}_{\text{res}}$, which is the margin required for rejecting the cube at resolution
$l_{\text{min}}$.

 The algorithm continues until $\Call{FinishCondition}{\,}$ is satisfied. Different criteria can be used as the finishing
 condition, such as when all the remaining cubes are small enough, or if the target function does not change much between two
 consecutive search levels. However, what is usually used in the branch and bound algorithms is to ensure that the currently best
 target $Q^*$ is within a certain distance $\varepsilon$ of the optimal target. For this, first, for each certain cube, we  compute an
 upper bound on the target function as the minimum of $Q^l(\mR_c,\t_c) +B^l_{\text{total}}$ over $l{=}l_{\text{min}}, \ldots,
 l_{\text{max}}$. The upper-bound $Q^{\text{up}}$ is the maximum of this quantity among all cubes in the queue. The algorithm
 finishes when $|Q^{\text{up}} - Q^*| < \varepsilon$. 


\begin{algorithm}
  \caption{A multiresolution Lipschitz optimization algorithm for rigid intensity-based registration. }\label{alg:lipschitz}
  
  \begin{algorithmic}[1]
    \Require 
    \Statex $\{f_\i^l\},\{g_\i^l\}$: registration image pairs for each resolution $l$, 
    \Statex $B^l_{\text{res}}$: inter-resolution bounds for each $l$,
    \Statex $L^l$: Lipschitz constant estimates for all parameters, for each $l$,
    \Statex $\tC_0$: the initial hypercube containing the parameter space, 
    \Statex $\mR_0,\t_0$: an initial solution (optional).
    \State $Q^* \gets Q(\mR_0,\t_0)$ or $-\infty$   \Comment{so-far best target value}
    \State $Q^{l*} \gets -\infty$   \Comment{best low-resolution lower-bound on target}
    
    
    \State \textsc{Queue}.\Call{Enqueue}{$\tC_0$}
    \State \textsc{Queue}.\Call{Enqueue}{\textbf{level-delimeter}}

    \Repeat  
    \State $\tC \gets $ \textsc{Queue}.\Call{Dequeue}{\,}  
    \If{$\tC =$ \textbf{level-delimeter}} \Comment{level completed}
    \State \scalebox{.97}{$Q^*,\mR^{*},\t^{*} {\gets} \Call{Amax}{Q^*, \mR^{*},\t^{*}, Q(\mR^{l*},\t^{l*}), \mR^{l*},\t^{l*}}$} \label{alg:line:update-level}
    \State \textsc{Queue}.\Call{Enqueue}{\textbf{level-delimeter}}
    \State \textbf{go to line} \ref{alg:line:until} \Comment{next iteration}
    \EndIf
    
    \State $\mR_c,\t_c \gets \Call{GetCentre}{\tQ}$
    \State $B^l_{\text{Lip}} \gets \Call{LipschitzBound}{L^l,\tC}$ \textbf{for all} $l$
    \State $B^l_{\text{total}} \gets B^l_{\text{res}} + B^l_{\text{Lip}}$ \textbf{for all} $l$
    \State $l_{\text{min}} \gets \arg\!\min_l B^l_{\text{total}}$
    
    \For{$l \gets l_{\text{max}}$ \textbf{downto} $l_{\text{min}}$}  
    \If{ $Q^l(\mR_c,\t_c) + B^l_{\text{total}} \leq Q^*$}
    \State \textbf{go to line} \ref{alg:line:until} \Comment{reject $\tC$, next iteration}
    \EndIf

    \EndFor
    
    \State \scalebox{.915}{$Q^{l*}{,}\mR^{l*}{,}\t^{l*} {\gets} \Call{Amax}{Q^{l*}{,} \mR^{l*}{,}\t^{l*}, Q^{l_\text{min}}(\mR_c{,}\t_c){-}B^{l_\text{min}}_{\text{res}}{,} \mR_c{,}\t_c}$} \label{alg:line:low-res-max}

    \State $Q^*, \mR^{*},\t^{*} {\gets} \Call{Amax}{Q^*, \mR^{*},\t^{*}, Q^{l*}, \mR^{l*},\t^{l*}}$

    \For{$\tC'$ \textbf{in} \Call{Split}{$\tC$}}  
    
    \State \textsc{Queue}.\Call{Enqueue}{$\tC'$}
    
    \EndFor
    
    \Until{\Call{FinishCondition}{\,}} \label{alg:line:until}
    
    \State \Return $\mR^*,\t^*$
    
    \Statex

    \Procedure{Amax}{$Q_1,\mR_1,\t_1, Q_2,\mR_2,\t_2$}
    \State \textbf{if} $Q_1 \geq Q_2$ \textbf{then} \Return $Q_1,\mR_1,\t_1$ 
    \State ~~~~~~~~~~~~~~~\textbf{else\,} \Return $Q_2,\mR_2,\t_2$ 
    
    \EndProcedure

  \end{algorithmic}
\end{algorithm}

\section{Experiments}
\label{sec:experiments}
The only globally optimal approach we found on rigid intensity-based registration is \cite{Cremers2008_shape_priors}, running a
single-resolution Lipschitz optimization for 2D images with larger Lipschitz bounds than ours. Their proposed estimation of
Lipschitz constant is more than 20 times larger than ours for rotation and more than 10 times larger for translation for the
image pair of Fig. \ref{fig:lh_res_sinc}. Therefore, here we compare the single-resolution and multiresolution versions of our
own algorithm and report the speed gain offered by the multiresolution technique.

\subsection{Implementation}
\label{sec:technical}

\paragraph*{Algorithm setup} We use the low-resolution to high-resolution registration of Sect. \ref{sec:low_high_reg} with
discretized integration and the bound \eqref{eq:bound_supp_interp_one_sinc_hl_upsample}, where $g$ is upsampled by a factor of
2, and bilinear (or trilinear) interpolation is used.  The highest decimation rate is set to the largest power of 2 where
$E^h_{fg} {=} \norm{f^h} \norm{g^h}$ in \eqref{eq:LH_error} is less than 0.05 of $\norm{f} \norm{g}$. For the example of
Fig. \ref{fig:lh_res_sinc} this gives a maximum decimation rate of 32. However, for images with a less compact spectrum this can
be smaller. We have found experimentally that this is a safe threshold for avoiding errors caused by discretized integration. We
set the convergence threshold $\varepsilon$ to $0.01 \norm{f} \norm{g}$ (see Sect. \ref{sec:multires_Lipschitz}). For the starfish
example of Fig. \ref{fig:lh_res_sinc} this provides a solution with an accuracy of about 0.2 pixels for translation and 0.04
degrees for rotation.

\paragraph*{Platform} The main algorithm is implemented under C++, while constructing the multiresolution pyramid and the bounds
are done under Matlab. The machine has Intel Core i7-5500U 2.4GHz CPU and 8GB of RAM.

\paragraph*{Low-pass filtering and downsampling} To implement the ideal filter \eqref{eq:low_pass} we zero out the DFT
coefficients of the discrete image outside the corresponding ball around the origin, and then take inverse DFT. This is not in general equivalent to
applying the ideal filter \eqref{eq:low_pass} to the corresponding continuous image. But, we can get closer to the ideal case by
using higher-resolution DFTs.  Another potential source of error is due to downsampling.  After
taking inverse DFT, we downsample the resulting discrete image by starting at the top-left pixel $(1,1)$ and then sampling every
$m$-th pixel. This means that the corresponding continuous image is sampled only within the boundary of the discrete
image. However, there is no guarantee that the continuous image will be zero at the sampling locations outside the image
boundary after low-pass filtering. One may solve this by sampling outside the boundaries up to a certain margin.  Both these
sources of error are negligible for images with a fairly large dark (near zero) margin. We can obtain such an image by zero
padding, or reversing the intensity values for images with a bright background.

\paragraph*{Upsampling} To upsample, zero-valued higher-frequency components are added to the DFT of the discrete image, and
inverse DFT is taken. Again, this is not equal to upsampling with sinc interpolation, but is accurate enough for images
with a dark margin, or when higher-resolution DFT is used.


\paragraph*{Image intensity} All images are converted to grayscale prior to registration. For images with a white or light
background  the intensity values are inverted.

\subsection{2D registration}
First we try the starfish image image of Fig. \ref{fig:lh_res_sinc}(a). The main image is $782{\times} 782$. The second image is
obtained by applying a translation of $(-50,20)$ pixels and a rotation of 45 degrees to the first image. 
 The multiresolution Lipschitz
optimization algorithm takes $44$ seconds to finish, while the single resolution version takes 3 hours and 13 minutes. This
means a large speedup of order 263.

Now, we apply our algorithm to register two different slices of the same brain MRI volume being 10 slices away from each
other (Fig. \ref{fig:reg_2D_brain}). We rotate the second slice by 160 degrees and and translate it by (-10,14) pixels. The
images are $192\by 156$. The single-resolution algorithm takes 534 seconds to perform the registration. The multiresolution
algorithm does it in 119 seconds. This time we get a 4.5 times speedup. This shows that the efficiency
gain can vary based on how concentrated the image is in the frequency domain and how the target value of the true solution stands
out in the registration parameter space.

\begin{figure}[h!]
  \centering
  \begin{tabular}{ccc}
    \includegraphics[width=0.10\textwidth]{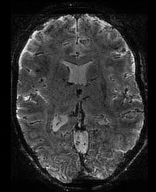}
    &
    \includegraphics[width=0.10\textwidth]{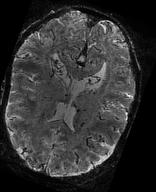}
    &
    \includegraphics[width=0.10\textwidth]{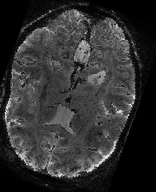}
    \\
    (a) & (b) & (c)
  \end{tabular}
  \caption{Registering two different slices of a brain MRI image. (a) the first slice, (b) the second slice rotated and
    translated, and (c) the first slice registered to the second slice.  The data is obtained from \cite{Forstmann2014_7Tesla}.}
  \label{fig:reg_2D_brain}
\end{figure}

\subsection{3D registration}
\label{sec:3D_reg}
Our current branch and bound algorithm is not yet able to handle the 6 degrees of freedom of a full 3D registration in a
reasonable time. What is done here is to align the centres of masses of the two images  and then perform a
rotation-only search around the centre of mass. We consider the binary volumes of the same vertebra in two different subjects
(Fig. \ref{fig:reg_3D_vertebra}). Both volumes are $81 \by 90 \by 24$ and are obtained by manually labeling CT images\footnote{The data 
can be obtained from CSI 2014 Workshop http://csi-workshop.weebly.com/challenges.html, also see \cite{Yao2012_vertebra}}. The accurate alignment of such images is usually required as the first step of constructing a shape
model.  The registration results can be seen in Fig. \ref{fig:reg_3D_vertebra}.  The single-resolution algorithm performs the
registration in 19 minutes and 47 seconds.  For the multiresolution approach it takes 29.8 seconds, which is a 40-fold
speedup.

\begin{figure}[h!]
  \centering
  \begin{tabular}{ccc}
    \includegraphics[width=0.11\textwidth]{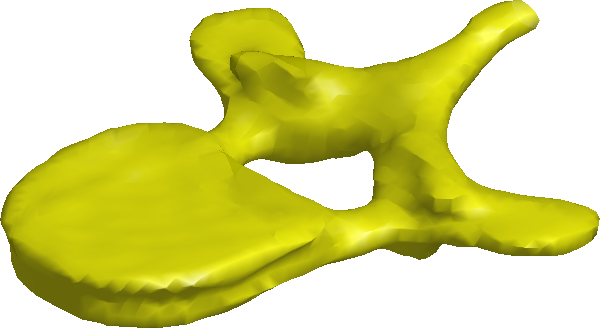}
    &
    \includegraphics[width=0.11\textwidth]{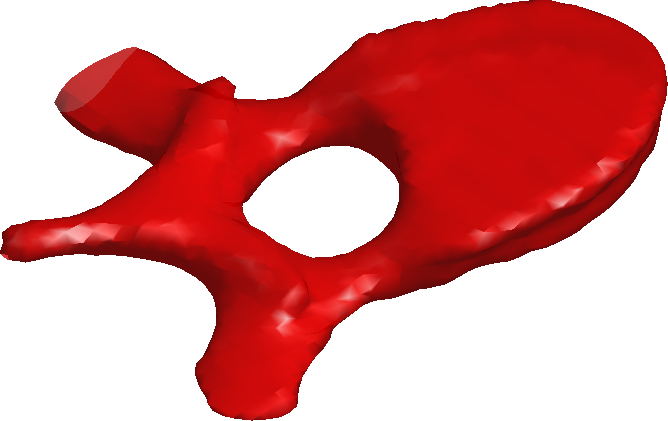}
    &
    \includegraphics[width=0.11\textwidth]{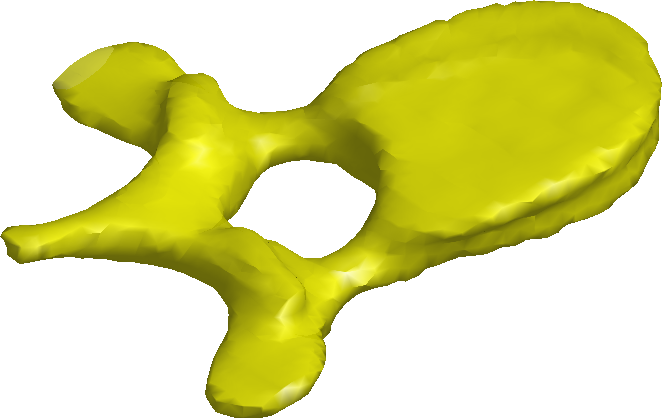}
    \\
    (a) & (b) & (c)
  \end{tabular}
  \caption{3D registration of binary vertebra shapes, (a,b) the registration image pair, (c) the first image registered to
    the second image.}
  \label{fig:reg_3D_vertebra}
\end{figure}

\subsection{Reflective symmetry detection}
A closely related problem to rigid registration is detecting the axis or plane of symmetry in 2D and 3D images. Most natural images are not perfectly
symmetric. One, however, can estimate the best choice by maximizing the correlation of an image with its mirror reflection across a
line (2D) or a plane (3D). The problem formulation and other details are given in Appendix \ref{sec:PoS}. Here, we demonstrate the application of 
the multiresolution Lipschitz algorithm.

Fig. \ref{fig:AoS} shows three examples of symmetry detection in 2D images, comparing the execution time of single-resolution
and multiresolution algorithms. In all cases the multiresolution algorithm is faster by a factor of more than 4.  The reader 
may have noticed that the \emph{butterfly} image takes more time than the \emph{face} image despite being smaller. This is due to sharper
edges (larger high-frequency components) in the butterfly image, leading to larger Lipschitz bounds.

\begin{figure}[h!]
   \centering
   \begin{tabular}{c|c|c|c}
     &
     \includegraphics[width=0.1\textwidth]{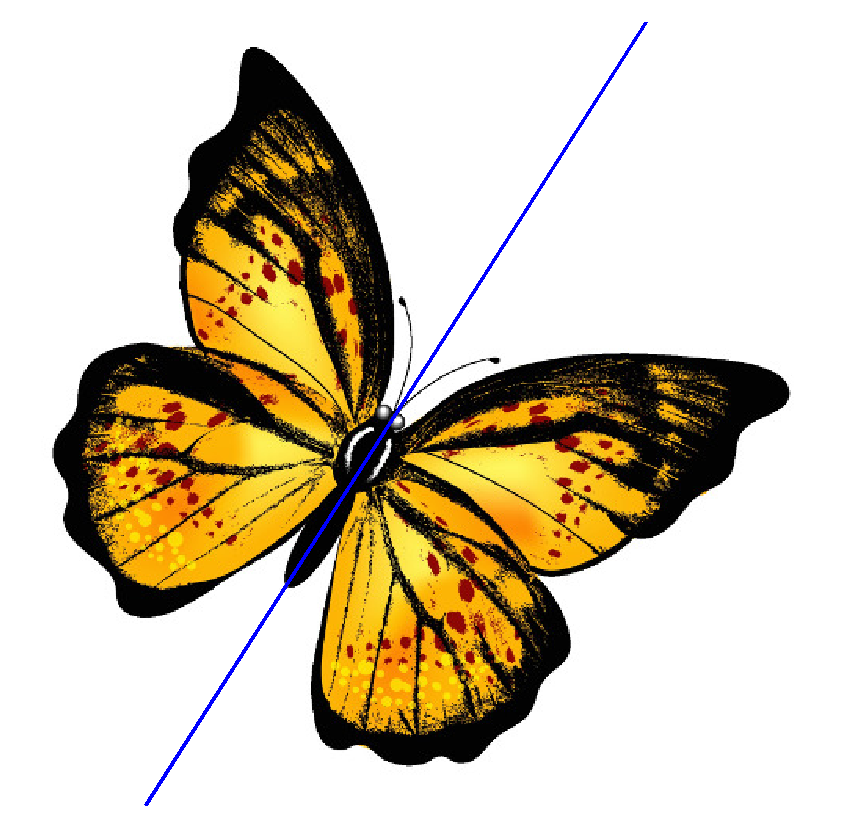}
     & 
     \includegraphics[width=0.09\textwidth]{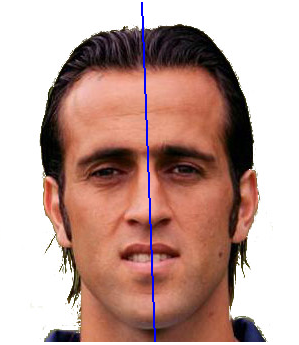}
     & 
     \includegraphics[width=0.08\textwidth]{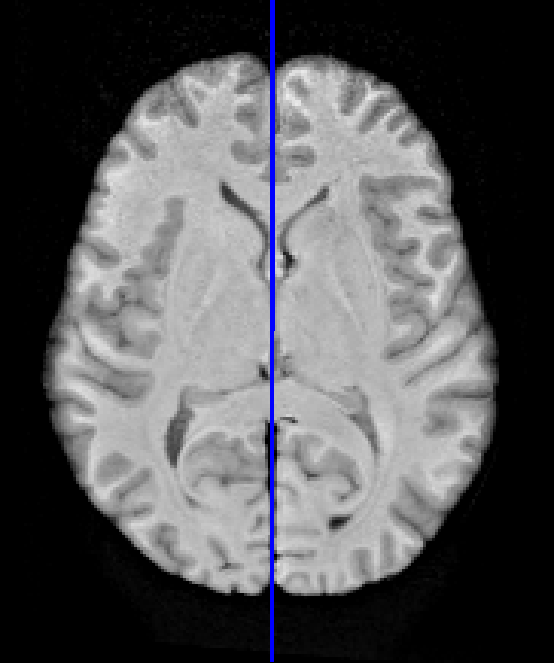}
     \\
    \hline
    resolution & $300\by 300$ & $340\by 320$ &  $218\by 182$ 
    \\
    \hline
    multi-res. time  & 14.7s & 3.9s & 0.92s
    \\
    \hline
    single-res. time & 64.3s & 16.5s & 4.4s
    \\
    \hline
    speedup            & 4.4 & 4.2 & 4.7
  \end{tabular}
  \caption{Detecting axis of symmetry for images of a butterfly, a face and a brain MRI slice. The butterfly image is from
    http://hdwallpapersrocks.com. The brain MRI image was obtained from http://www.brain-map.org.}
  \label{fig:AoS}
\end{figure}


For estimating plane of symmetry, we use the binary volume of a human vertebra (Fig. \ref{fig:PoS}). We use the same data 
as for 3D registration. Reflective symmetry detection for binary volumes is specially useful in shape analysis
and modeling. The 3D image is $114 \by 106 \by 44$. The single-resolution algorithm takes 2 hours and 51 minutes to detect the
optimal plane of symmetry. The multiresolution algorithm does this in 29.2 seconds. This is about 350 times faster, which is a
considerable speedup.


\begin{figure}[h!]
  \centering
  \includegraphics[width=0.20\textwidth]{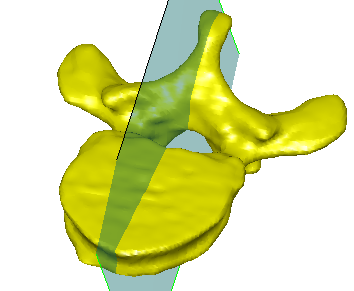}
  \caption{Estimating the plane of symmetry for the binary volume of a vertebra
  }
  \label{fig:PoS}
\end{figure}

\section{Conclusion and future work}
We showed that low-resolution target values can tightly bound the high-resolution target function for rigid intensity based
image registration. This led to a multiresolution search scheme in which the search at each resolution limits the search space
for the next higher resolution level. By embedding into the Lipschitz optimization framework, we showed that this strategy can
significantly speed up the globally optimal registration algorithms. This paper was mostly focused on providing a working
example of such an embedding and demonstrating the efficiency gains it provides. Optimizing the efficiency of the
single-resolution algorithm itself can be the subject of future research. This can be done by finding better Lipschitz
bounds or employing more effective search strategies instead of the breadth first method. An extension of this work could be to
find bounds for more general transformations (similarity, affine, nonlinear, etc.) and other types of target functions (robust,
information theoretic, etc.). The effectiveness of our multiresolution approach comes from the energy compaction of natural
images in the Fourier basis. An important extension is to examine the application of similar ideas for other bases, and
obtaining optimized basis functions tailored for each image pair.

\appendices

\section{Bounded support interpolation}
\label{sec:bounded_support_interp_app}
\begin{proof}[Proof of Proposition \ref{thm:bound_supp_interp}]
The difference $\left|Q'(\mR,\t) - Q'^l(\mR,\t) \right|$ can be written in the frequency domain using  \eqref{eq:target_freq}:
\begin{align}
  \label{eq:LH_diff_interp}
  \resizebox{.88\columnwidth}{!}{$\left|\int \overline{F'(\z)} \, G'(\mR \z) \, e^{2\pi i \t^T \z}\, d\x - \int \overline{F'^{l}(\z)} \, G'^{l}(\mR \z) \, e^{2\pi i \t^T \z}d\x \right|$}.
\end{align}
Using the triangle inequality, we break the above 
into integrations over $\Omega$ and it complement $\bar{\Omega}$:
 \begin{align}
   \label{eq:Dz_separate} 
   \left|\int D(\z) d\x \right|   \leq  \left|\int_\Omega D(\z) d\x \right| + \left|\int_{\bar{\Omega}} D(\z) d\x \right|, 
\end{align}
 where $D(\z) = e^{2\pi i \t^T \z} \left(\overline{F'(\z)} \, G'(\mR \z) - \overline{F'^{l}(\z)} \, G'^{l}(\mR \z)\right)$. 
 Let us first deal with the integral inside $\Omega$. Since $F(\z) {=} F^l(\z)$ for all $\z \in \Omega$, we expect  
$F'(\x)$ and $F'^l(\x)$ to be close. A similar argument goes for $G'(\x)$ and $G'^l(\x)$. 
Now using the equality 
\begin{align}
  ab-a^lb^l = a(b{-}b^l) + b(a{-}a^l) - (a{-}a^l)(b{-}b^l)
\end{align}
for any four complex numbers $a,b,a^l$ and $b^l$ we can write
\begin{align}
  &\left|\int_\Omega \left(\overline{F'(\z)} \, G'(\mR \z ) - F'^{l}(\z) \, G'^l(\mR \z ) \right) e^{2\pi i \t^T \z}\right| 
  \nonumber\\
  \leq &\hspace{10pt}\left|\int_\Omega\overline{F'(\z)} \, \left(G'(\mR \z ) - G'^l(\mR \z )\right) e^{2\pi i \t^T \z} \right|
  \nonumber\\
  &+ \left|\int_\Omega G'(\mR \z ) \left(\overline{F'(\z)} - \overline{F'^l(\z)} \right) e^{2\pi i \t^T \z} \right|
  \nonumber\\
  &+  \left|\int_\Omega\left(\overline{F'(\z)} - \overline{F'^l(\z)} \right) \left(G'(\mR \z ) - G'^l(\mR \z )\right) e^{2\pi i \t^T \z} \right|
  \nonumber\\
  \leq &  \resizebox{.76\columnwidth}{!}{$\sqrt{\int_\Omega |F'(\z)|^2 d\z} \sqrt{\int_\Omega  \left|G'( \z ) - G'^l( \z )\right|^2 d\z}$}
  \nonumber\\
  &+ \resizebox{.76\columnwidth}{!}{$\sqrt{\int_\Omega |G'(\z)|^2 d\z} \sqrt{\int_\Omega  \left|F'( \z ) - F'^l( \z )\right|^2 d\z}$}
  \nonumber\\
  &+ \resizebox{.81\columnwidth}{!}{$\sqrt{\int_\Omega  \left|F'( \z ) {-} F'^l( \z )\right|^2 d\z} \sqrt{\int_\Omega  \left|G'( \z ) {-} G'^l( \z )\right|^2 d\z}$}
\end{align}
which is exactly equal to lines \eqref{eq:bound_supp_interp_in1} and \eqref{eq:bound_supp_interp_in2} of the bound. 
The first inequality above is due to the triangle inequality. The second inequality is obtained by Cauchy-Schwarz followed by a
change of variables $\z \leftarrow \mR\z$. 
Now, we consider the integral outside $\Omega$. Since, outside $\Omega$ the signals $F'$ and $F'^l$ (aslo $G'$ and $G'^l$) 
are not similar we consider them separately:
 \begin{align}
   &\left|\int_{\bar{\Omega}} \left(\overline{F'(\z)} \, G'(\mR \z ) - F'^{l}(\z) \, G'^l(\mR \z ) \right) e^{2\pi i \t^T \z}d\z\right| 
   \nonumber\\
   &\leq   \scalebox{.98}{$\left|\int_{\bar{\Omega}} \overline{F'(\z)} \, G'(\mR \z ) e^{2\pi i \t^T \z} d\z\right| 
   {+} 
   \left|\int_{\bar{\Omega}}  F'^{l}(\z) \, G'^l(\mR \z ) e^{2\pi i \t^T \z} d\z\right| \label{eq:HL_interp_diff_B_bar}$}
   \nonumber \\
   &\leq  \resizebox{.60\columnwidth}{!}{$\sqrt{\int_{\bar{\Omega}} |F'(\z)|^2 d\z}  \sqrt{\int_{\bar{\Omega}} |G'(\z)|^2 d\z} $}
   \nonumber\\
   &\hspace{10pt}+ 
    \resizebox{.60\columnwidth}{!}{$\sqrt{\int_{\bar{\Omega}} |F'^l(\z)|^2 d\z}  \sqrt{\int_{\bar{\Omega}} |G'^l(\z)|^2 d\z}$}.
 \end{align}
 This is equal to \eqref{eq:bound_supp_interp_out}.
\end{proof}

\section{}
\label{sec:bounded_support_interp_app_Fsinc}
To obtain \eqref{eq:bound_supp_interp_one_sinc2},  we treat the frequency areas $\Omega$ and
$\bar{\Omega}$ separately, like in \eqref{eq:Dz_separate}.  We use the same approach from which \eqref{eq:bound_supp_interp_in1} and
\eqref{eq:bound_supp_interp_in2} are obtained. However, since in this case $F'=F$ and $F'^l = F^l$, and also $F(\z) = F^l(\z)$
inside $\Omega$, then we have $E_{\Omega}(F' {-} F'^l) = E_{\Omega}(F {-} F^l) = 0$. Therefore, \eqref{eq:bound_supp_interp_in1} and
\eqref{eq:bound_supp_interp_in2}  reduce to $\sqrt{ E_{\Omega}(F) \, E_{\Omega}(G' {-} G'^l)}$ which is the first term in
\eqref{eq:bound_supp_interp_one_sinc2}.  Now, outside $\Omega$ we have $F^l(\z) = 0$, and therefore
\begin{align}
  \left|\int_{\bar{\Omega}} D(\z) d\x \right| 
  &=
  \left|\int_{\bar{\Omega}}  \overline{F(\z)} G'(\mR \z) e^{2\pi i \t^T \z} d\z \right|  
  \nonumber\\
  &=
  \left|\int_{\Omega'\setminus \Omega}  \overline{F(\z)} G'(\mR \z) e^{2\pi i \t^T \z} d\z \right|  \label{eq:FGp_l2}
  \\
  &\leq
  \resizebox{.62\columnwidth}{!}{$\sqrt{\int_{\Omega'\setminus \Omega} |F(\z)|^2 d\z}   \sqrt{\int_{\Omega'\setminus \Omega} |G'(\z)|^2 d\z} $} \label{eq:FGp_l3}
  \\
  & =  \sqrt{E_{\bar{\Omega}}^{\phantom{l}}(F)} \, \sqrt{E_{\Omega' \setminus \Omega}(G')}, \label{eq:FGp_l4}
\end{align}
where $D(\z) {=} \left(\overline{F(\z)} \, G'(\mR \z ) - F^{l}(\z) \, G'^l(\mR \z ) \right) e^{2\pi i \t^T \z}$ and $\Omega'$ is
the ball of radius $\frac{\sqrt{d}}{2T}$. This is equal to the second term in \eqref{eq:bound_supp_interp_one_sinc2}. The
equality \eqref{eq:FGp_l2} holds since  $F(\z)$ is bandlimited to $\frac{1}{2T}$ in every dimension, and hence the
integrand is zero outside $\Omega'$. The inequality \eqref{eq:FGp_l3} is obtained by Cauchy-Schwarz followed by a change of
variables $\z \leftarrow \mR \z$ in the second integral. The change of variables is possible since $\mR \z \in \Omega'
{\setminus} \Omega$ if and only if $\z \in \Omega' {\setminus} \Omega$.  Last line uses the fact that $E_{\Omega' \setminus
  \Omega}^{\phantom{l}}(F) = E_{\bar{\Omega}}^{\phantom{l}}(F)$.

\section{Axis and plane of symmetry}
\label{sec:PoS}
We represent a plane by its normal $\u$ and its distance $\alpha$ from the origin. 
The reflection of a point $\x$ with respect to the plane can be formulated as 
$\x'= (\mI{-}2\u\u^T) (\x - 2 \alpha \u)$. The correlation-based target function
then becomes 
\begin{align}
\label{eq:target_PoS}
Q(\u,\alpha) = \int f(\x) \, f\left((\mI{-}2\u\u^T) (\x - 2 \alpha \u)\right)\, d\x. 
\end{align}
All the inter-resolution bounds in the registration problem works here by simply replacing $g$ with $f$. One can verify this by
using $f$, $(\mI{-}2\u\u^T)$ and $-2 \alpha \u$, respectively, instead of $g$, $\mR$ and $\t$ in the derivations. It is only
left to find Lipschitz bounds for the new parameters $\alpha$ and $\u$. Following a similar approach as Sect. \ref{sec:lipschitz_bounds},
we can write $Q(\u,\alpha)$ as
\begin{align}
\label{eq:target_PoS_freq_matrix}
Q(\u,\alpha) = \int \F(\z)^T \, \mGamma(-4\pi \alpha\u^T \z) \, \F((\mI{-}2\u\u^T) \z) \, d\z.
\end{align}
To be concise, let $\beta = -4\pi \alpha\u^T \z$. A bound for $\alpha$ can be
\begin{align}
\label{eq:d_dalpha_target_PoS_freq_matrix_bound}
|\frac{d}{d\alpha} Q(\u,\alpha)| &= 4 \pi \left|\int \u^T\z \,\F(\z)^T \, \scalebox{1.14}{$\mGamma$}'(\beta) \, \F((\mI{-}2\u\u^T) \z) \, d\z\right|
\nonumber\\
&\leq 4 \pi \sum_{i=0}^P \int_{\Omega_i} \norm{\z} \norm{\F(\z)} \norm{\F((\mI{-}2\u\u^T) \z)} \, d\z
\nonumber\\
&\leq 4 \pi \sum_{i=0}^P r_{i+1} \int_{\Omega_i} \norm{\F(\z)}^2 d\z.
\end{align}
To bound the derivative with respect to $\u$ we represent it in polar/spherical 
coordinates. Assume $\gamma$ is a parameter by which $\u$ is parameterised, and 
let $\u' = \frac{d}{d\gamma} \u$. 
Then we have 
\begin{align}  
\label{eq:d_dgamma_target_PoS_freq_matrix}
\frac{d}{d\gamma} Q(\u,\alpha) = &{-}4 \pi \alpha\int {\u'}^T\!\z \,\F(\z)^T \, \scalebox{1.14}{$\mGamma$}'(\beta) \, \F((\mI{-}2\u\u^T) \z) \, d\z
\nonumber\\
&\hspace{-1.7cm} {-} 2 \!\!\int  \F(\z)^T  \scalebox{1.14}{$\mGamma$}(\beta)  J_\F((\mI{-}2\u\u^T) \z) (\u'\u^T{+}\u{\u'}^T) \z d\z.
\end{align}
As $\u'$ is always perpendicular to $\u$ (see polar/spherical parameterization of $\vOmega$ in Sect.~\ref{sec:lipschitz_bounds}), 
we have
\begin{align}
\label{eq:d_dgamma_target_PoS_freq_matrix_bound}
|\frac{d}{d\gamma} Q(\u,\alpha)| \leq& 4 \pi \alpha \norm{\u'}\sum_{i=0}^P r_{i+1} \, E_{\Omega_i}(F)
\nonumber\\
&\hspace{-2cm} {+} 2 \sum_{i=0}^P  \sqrt{E_{\Omega_i}(F) \int_{\Omega_i} \norm{J_\F(\z) \, (\u{\u'}^T{-}\u'\u^T)\,\z}^2  d\z}.
\end{align}
where $E_{\Omega_i}(F) = \int_{\Omega_i} \norm{\F(\z)}^2 d\z$.
For the 2D case $\u$ equals $[\cos(\phi), \sin(\phi)]^T$. Then 
\begin{align}
\label{eq:d_dphi_target_PoS_freq_matrix_bound}
|\frac{d}{d\phi} Q(\u,\alpha)| =& 4 \pi \alpha \sum_{i=0}^P r_{i+1} \,E_{\Omega_i}(F)
\nonumber\\
& \hspace{-2cm}+ 2 \sum_{i=0}^P E_{\Omega_i}(F)^{1{\!/}2} \, \sqrt{\int_{\Omega_i} \norm{J_\F(\z) \,\z^\perp}^2  \, d\z},
\end{align}
where $\z^\perp$ was defined in \eqref{eq:d_dth_Ri_z}. 
For 3D, we use spherical coordinates:  $\u = [\cos(\phi) \cos(\psi), \sin(\phi) \cos(\psi), \sin(\psi)]^T$. Thus,
\begin{align}
\label{eq:d_dphi_3D_target_PoS_freq_matrix_bound}
|\frac{d}{d\phi} Q(\u,\alpha)| \leq& 4 \pi \alpha\, |\cos\psi| \sum_{i=0}^P r_{i+1} \,E_{\Omega_i}(F)
\nonumber\\
& \hspace{-2cm}+ 2 \,|\cos\psi| \, \sum_{i=0}^P E_{\Omega_i}(F)^{1{\!/}2} \, \sqrt{\int_{\Omega_i} \norm{J_\F(\z)}^2 \norm{\z}^2  \, d\z},
\end{align}
where $\norm{J_\F(\z)}$ denotes the spectral norm of $J_\F(\z)$, and
\begin{align}
\label{eq:d_dpsi_3D_target_PoS_freq_matrix_bound}
|\frac{d}{d\psi} Q(\u,\alpha)| \leq& 4 \pi \alpha \sum_{i=0}^P r_{i+1} \,E_{\Omega_i}(F)
\nonumber\\
& \hspace{-2cm}+ 2 \sum_{i=0}^P \,E_{\Omega_i}(F)^{1{\!/}2} \, \sqrt{\int_{\Omega_i} \norm{J_\F(\z)}^2 \norm{\z}^2  \, d\z}.
\end{align}

\bibliographystyle{IEEEtran}
\bibliography{phd.bib}

\end{document}